\newtheorem{theorem}{Theorem}
\newtheorem{lemma}{Lemma}
\newcommand{\identityf}[1]{\mathbf 1_{\{#1\}}}
\newlength{\dhatheight}
\begin{document}

\title{Multiple Testing Framework for Out-of-Distribution Detection}

\author
{Akshayaa Magesh,
        Venugopal V. Veeravalli,
        Anirban Roy, 
        Susmit Jha
\thanks{A. Magesh and V.V. Veeravalli are with the Department of Electrical and Computer Engineering, University of Illinois at Urbana-Champaign, IL, 61820 USA. (email: amagesh2@illinois.edu, vvv@illinois.edu).
A. Roy and S. Jha are with SRI International. (email: anirban.roy@sri.com, susmit.jha@sri.com).}
\thanks{This work was supported in part by the National Science Foundation under grants \#2106727 and \#1740079, and by the Army Research Laboratory under Cooperative Agreement W911NF-17-2-0196, through the University of Illinois at Urbana-Champaign.}
\thanks{Part of this work was presented at the ICML 2022 Workshop on Distribution-Free Uncertainty Quantification.}
}

\maketitle

\begin{abstract}
We study the problem of Out-of-Distribution (OOD) detection, that is, detecting whether a Machine Learning (ML) model's output can be trusted at inference time. While a number of tests for OOD detection have been proposed in prior work, a formal framework for studying this problem is lacking. We propose a definition for the notion of OOD that includes both the input distribution and the ML model, which provides insights for the construction of powerful tests for OOD detection. We also propose a multiple hypothesis testing inspired procedure to systematically combine any number of different statistics from the ML model using conformal p-values. We further provide strong guarantees on the probability of incorrectly classifying an in-distribution sample as OOD. In our experiments, we find that threshold-based tests proposed in prior work perform well in specific settings, but not uniformly well across different OOD instances. In contrast, our proposed method that combines multiple statistics performs uniformly well across different datasets and neural networks architectures. 
\end{abstract}

\section{Introduction}

Given the ubiquitous use of ML models in safety-critical applications such as self-driving and medicine,
there is a need to develop methods to detect whether an ML model's output at inference time can be trusted. This problem is commonly referred to as the Out-of-Distribution (OOD) detection problem. If an output is deemed untrustworthy by an OOD detector, one can abstain from making decisions based on the output, and default to a safe action.  There has been a flurry of works on this problem in recent years. A particular area of focus has been on OOD detection for deep learning models  (\cite{mahala, odin, gram, gradnorm, energy, idecode}). While neural networks generalize quite well to inputs from the same distribution as the training distribution, recent works have shown that they tend to make incorrect predictions with high confidence, even for unrecognizable or irrelevant inputs (see, e.g., \cite{szegedy2013intriguing, nguyen2015deep, hendrycks2016baseline}). 

In many of the prior works on OOD detection,  OOD inputs are considered to be inputs that are not generated from 
the input training distribution (see, e.g., \cite{odin, gram}), which better describes the classical problem of outlier detection. However, in contrast to outlier detection, the goal in OOD detection is to flag untrustworthy outputs from a \emph{given} ML model. Thus, it is essential for the definition of an OOD sample to involve the ML model. One of the contributions of this paper is a formal definition for the notion of OOD that involves both the input distribution and the ML model.

In a line of work in OOD detection, it is assumed that the detector has access (exposure) to OOD examples, which can be used to train an auxiliary classifier, or to tune hyperparameters for the detection model (\cite{mahala, oe, odin, liang2022integrative}). Other works rely on identifying certain patterns observed in the training data distribution, and use these patterns to train the original ML model to help detect OOD examples. 
For instance, in \cite{idecode}, a neural network is trained to leverage in-distribution equivariance propoerties for OOD detection. There is another line of work in which tests are designed based on statistics from generative models trained for OOD detection. For instance, in \cite{bergamin2022model}, statistics from a deep generative model are combined through p-values using the Fisher test.  In this paper, we focus exclusively on developing methods that do not use any OOD samples, and can be applied to \emph{any} pre-trained ML model. 

Prior work has primarily been focused on identifying promising test statistics and corresponding thresholds, sometimes motivated by empirical observations of the values taken by these statistics for certain in-distribution and OOD inputs. 
For instance, in \cite{mahala}, a confidence score is constructed through a weighted sum of Mahalanobis distances across layers, using the class conditional Gaussian distributions of the features of the neural network under Gaussian discriminant analysis. In \cite{odin}, a statistic based on input perturbations and temperature-scaled softmax scores is proposed. In \cite{energy}, a \textit{free energy} score based on the denominator of the temperature-scaled softmax score is proposed. In \cite{gram}, scores are derived from Gram matrices, through the sum of deviations of the Gram matrix values from their respective range
observed over the training data. In \cite{angelopoulos2021learn}, the broad goal is to find all candidate functions from a given collection in an offline manner through multiple testing, such that any one of these candidate functions controls some risk at inference time. This approach is applied in \cite{angelopoulos2021learn} to the problem of OOD detection to select suitable thresholds for a given test statistic to control the false alarm rate. In \cite{gradnorm}, vector norms of gradients from a pre-trained network are used to form a test statistic. 
In \cite{haroush2021statistical}, OOD detection in Convolutional Neural Networks (CNNs) is studied; spatial and channel reduction techniques are employed to produce statistics per layer, and these layer statistics are combined to form a final score using a method motivated by the tests proposed by \cite{simes1986improved} and \cite{fisher1992statistical}. Thus, their proposed algorithm computes a single score using all the intermediate features of the CNN and its corresponding empirical p-value. They provide marginal false alarm guarantees averaged over all possible validation datasets used to compute the empirical p-value. Additionally, the proposed method in \cite{haroush2021statistical} can be applied only to CNNs, and not any general ML model.
To summarize, from prior work, it is unclear which among these scores/statistics is the best for OOD detection, or if there exists such a test statistic that is useful for all possible out-distributions. The latter question was raised in \cite{zhang2021understanding}, where they posit that one can construct an out-distribution for any single score or statistic that results in poor detection performance.

The false alarm probability or type-I error of a test refers to the probability of a single in-distribution sample being misclassified as OOD, and the detection power refers to the probability of correctly identifying an OOD distribution sample. Note that the detection power is also referred to as detection accuracy in prior OOD works. In much of the prior work on OOD detection, the false alarm probability is estimated using empirical evaluations on certain in-distribution datasets. What is lacking in such works is a rigorous theoretical analysis of the probability of false alarm, which can be used to meet pre-specified false alarm constraints. Such false alarm guarantees are crucial for the responsible deployment of OOD methods in practice. Note that it is not possible to give any theoretical guarantees on the detection power of an OOD detection test without prior information about the class of all possible out-distributions, which is typically not available in practice. Therefore, in prior work on OOD detection, the detection powers of candidate OOD methods that meet the same pre-specified false alarm levels are compared empirically. 

In this work, we propose a method inspired by \emph{multiple hypothesis testing}~(\cite{holm,bh, benjamini2001}) to systematically combine multiple test statistics for OOD detection. Our method works for combining any number of statistics with an arbitrary dependence structure, for instance the Mahalanobis distances (\cite{mahala}) and the Gram matrix deviations across layers (\cite{gram}) of a neural network. We should emphasize there is no obvious way to directly combine such disparate statistics with provable guarantees for OOD detection. Detection procedures for multiple hypothesis testing are usually based on combining p-values across hypotheses \cite{holm, bh,benjamini2001}. However, in the problem of OOD detection, the probability measures under both the in-distribution (null) and out-of-distribution (alternate) settings are unknown, and thus the actual p-values cannot be computed. In conformal inference methods (\cite{vovk,icad}) the p-values are replaced with \textit{conformal p-values}, which are estimates computed from the empirical CDF of the test statistics. These conformal p-values are data-dependent, as they are calculated from   in-distribution samples. In the 
%
procedure proposed in this paper, we use conformal p-values and provide rigorous theoretical guarantees on the probability of false alarm, conditioned on the dataset used for computing the conformal p-values.

\subsection*{Contributions}

\begin{enumerate}
    \item  We formally characterize the notion of OOD, using which we provide insights on why it is necessary for OOD tests to involve more than just the new unseen input and the final output of the ML model for OOD detection.
    \item We propose a new approach for OOD detection inspired by multiple testing. Our proposed test allows us to combine, in a systematic way, any number of different test statistics produced from the ML model with arbitrary dependence structures.
    \item We provide strong theoretical guarantees on the probability of false alarm, conditioned on the dataset used for computing the conformal p-values. This is stronger than false alarm guarantees in prior work (e.g.,  \cite{icad, idecode}), where the guarantees are given in terms of an expectation over all possible datasets. 
    \item We perform extensive experiments across different datasets to demonstrate the efficacy of our method. We perform ablation studies to show that combining various statistics using our method produces uniformly good results across various types of OOD examples and Deep Neural Network (DNN) architectures.
    
\end{enumerate}

\section{Problem Statement and OOD Modelling}

Consider a learning problem with $(X,Y) \sim \mathrm{P}_{X,Y}$, where $(X,Y)$ is the input-output pair and $\mathrm{P}_{X,Y}$ is the distribution of the dataset available at training time. Let the dataset available at training time be denoted by $\mathcal{T} = \{(X_1,Y_1), (X_2,Y_2), ..., (X_{n}, Y_{n})\}$, where $n$ is the size of the dataset. Let the ML model be denoted by $f(\mathbf{W},.)$, where $\mathbf{W}$ is the random variable denoting the parameters of the ML model. For instance, $\mathbf{W}$ depicts the weights and biases in a neural network. Let $(X_\mathrm{test}, Y_\mathrm{test})$ be a random variable generated from an unknown distribution, and $(x_\mathrm{test}, y_\mathrm{test})$ be an instance of this random variable seen by the ML model at inference time. Given $\mathcal{T}$ and the ML model, the goal is to detect if this new unseen sample might produce an \emph{untrustworthy} output. This might happen because either the input does not conform to the training data distribution, or if the ML model is unable to capture the true relationship between the input $X_\mathrm{test}$ and the true label $Y_\mathrm{test}$. Whether a new unseen sample is OOD or not depends on both the ML model and the distribution $\mathrm{P}_{X,Y}$.

A precise mathematical definition of the OOD detection problem that captures both the input distribution and the ML model appears to be lacking in prior work. The most common definition is based on testing between the following hypotheses (see, e.g., \cite{odin}):
\begin{align} \label{eq:def_odin}
\begin{split}
    &\mathrm{H}_0 : X_\mathrm{test} \sim \mathrm{P}_X \\
    &\mathrm{H}_1 : X_\mathrm{test} \not\sim \mathrm{P}_X,
\end{split}
\end{align}
%
where $\mathrm{H}_0$ corresponds to `in-distribution' and $\mathrm{H}_1$ corresponds to `out-of-distribution'. However, such a definition does not involve the ML model, and better describes the problem of outlier detection, which is fundamentally different from the problem of OOD detection. 

%


Let $\hat{Y} = f(\mathbf{W},X)$, and  consider the distribution  $\mathrm{P}_{X,\hat{Y}} = \mathrm{P}_X \times \mathrm{P}_{\hat{Y} | X}$ as the joint distribution of the input and the output of the ML model. Using this joint distribution as the \textit{`in-distribution'}, consider the following testing problem:
\begin{align}\label{eq:OOD_formulation}
\begin{split}
    &\mathrm{H}_0 : (X_\mathrm{test},Y_\mathrm{test}) \sim \mathrm{P}_{X,\hat{Y}} \\
    &\mathrm{H}_1 : (X_\mathrm{test},Y_\mathrm{test}) \not\sim \mathrm{P}_{X,\hat{Y}}.
\end{split}
\end{align}

Note that this is a definition of OOD detection that involves both the input distribution and the ML model (through $\mathrm{P}_{\hat{Y} | X} = \mathrm{P}_{f(\mathbf{W}, X = x) | X = x}$). It also captures both the cases where the input is not drawn from $\mathrm{P}_X$, and when the ML model is unable to capture the relationship between the unseen input and its label. 

The hypothesis test in \eqref{eq:OOD_formulation} involves the true label ${Y}_\mathrm{test}$ and the distribution $\mathrm{P}_{X,\hat{Y}}$. Since these quantities are unknown, the model prediction $\hat{Y}_\mathrm{test}$ and the empirical distribution of $(X,\hat{Y})$ based on the training data, respectively, may be used instead. When the ML model performs well during training, i.e., $Y = \hat{Y}$ for almost all training data points, the empirical versions of $\mathrm{P}_{X,\hat{Y}}$ and $\mathrm{P}_{X,Y}$ agree, and we again arrive at 
a definition that does not involve the ML model. Thus, we conclude that it is necessary to use other functions of the input derived from the ML model\footnote{In this paper, we use the term \textit{statistic} or \textit{score} interchangeably to denote these functions of the input derived from the ML model.} in addition to just the final output in constructing test statistics for effective OOD detection. Such a strategy is commonly employed, without theoretical justification, in many OOD detection works, for instance, through the use of intermediate features of a neural network to calculate the Mahalanobis score (\cite{mahala}) and gram matrix score (\cite{gram}), and gradient information to calculate the GradNorm score (\cite{gradnorm}). The discussion above provides a qualitative theoretical justification for these strategies developed in prior works. 

\section{Proposed Framework and Algorithm} \label{sec:proposed_algo}

In this section, we describe our proposed framework formally, and present our algorithm to combine any number of different functions of the input with an arbitrary dependence structure.

In our formulation of OOD detection in \eqref{eq:OOD_formulation}, we posit that, in addition to the input and the output from the ML model, it is necessary to use other functions\footnote{Without loss of generality, we may assume that these functions are scalar-valued.} of the input, which are dependent on the ML model. We refer to these functions as \emph{score functions}, denoted by $s^1(.), \ldots, s^K(.)$.  The outputs of the score functions are scalar-valued \emph{scores} $T^1, T^2, \ldots , T^K$:
\begin{align}\label{eq:intermediate_fns}
\begin{split}
    T^1 &= s^1(X) \\
    \vdots  \\
    T^K &= s^K(X).
\end{split}
\end{align}
The score functions are assumed to be chosen based on prior information in such a way that the scores are likely to take on small values for in-distribution inputs and larger values for OOD inputs. For a new input $X_\mathrm{test}$, let $(T^1_\mathrm{test}, T^2_\mathrm{test}, \ldots , T^K_\mathrm{test})$ be the corresponding scores. Note that one of scores $T^k_\mathrm{test}$  could be based on the final output from the learning model $\hat{Y}_\mathrm{test}$.

\subsection{Motivation for multiple testing framework}
In order to construct an OOD detection test for the new sample $X_\mathrm{test}$ using the scores, the scores would need to be combined in some manner. Since we do not know the dependence structure between the scores, combining them in an ad hoc manner, such as summing them up, cannot be justified and may result in tests with low power (probability of detection) for many OOD distributions.  For instance, consider a simple bivariate Gaussian setting as follows: 
\begin{align}
\begin{split}
    &\mathrm{H}_0 : (T^1, T^2) \sim \mathcal{N}(0,I) \\
    &\mathrm{H}_1 : (T^1, T^2) \not\sim \mathcal{N}(0,I).
\end{split}
\end{align}
Let the statistic $T = T^1 + T^2$, and let $Q$ be the p-value when the observed value of the statistic is $t$. 
Recall that the p-value is given by:
\begin{equation}
    Q = \mathrm{P_{H_{0}}}\left\{ T \geq  t \right\}.
\end{equation}
For given $\alpha >0$, let ${\cal T}_1$ denote the test which rejects $\mathrm{H}_0$ if $Q < \alpha$. For test ${\cal T}_1$, the probability of false alarm, i.e.,
\begin{equation}
    \mathrm{P_{H_0}}(\text{reject } \mathrm{H}_0), 
\end{equation}
can be controlled at $\alpha$, by exploiting the fact that p-values have a uniform distribution under the null hypothesis. However, the detection power of the test under different possible distributions under the alternate hypothesis might be poor. For instance, if $(T^1, T^2) \sim \mathcal{N}({(1, -1)}, I)$ under the alternate hypothesis,  
the statistic $T = T^1 + T^2$ has the same distribution under the null and alternate hypotheses. Thus the detection power of test {${\cal T}_1$} is upper bounded by $ \alpha$. It is possible to find many such joint distributions {for} the alternate hypothesis, {under which} the detection power of test {${\cal T}_1$} is {poor}, i.e., it is {close to} the probability of false alarm.

On the other hand, consider the following split of the above testing problem into two binary hypothesis testing problem corresponding to the statistics $T^1$ and $T^2$:
\begin{align}\label{split_test}
\begin{split}
&\mathrm{H}_{0,1} : T^1 \sim \mathcal{N}(0,1) \;\;\;\;\;\; \mathrm{H}_{1,1} : T^1 \not\sim \mathcal{N}(0,1) \\
&\mathrm{H}_{0,2} : T^2 \sim \mathcal{N}(0,1) \;\;\;\;\;\; \mathrm{H}_{1,2} : T^2 \not\sim \mathcal{N}(0,1).
\end{split}
\end{align}
Let $Q^1$ and $Q^2$ be the p-values corresponding to the two individual tests in \eqref{split_test}, and $Q^{(1)} \leq Q^{(2)}$ be the ordered p-values. 
Let 
\begin{equation}
    m = \max \{i: Q^{(i)} \leq i\alpha /2\}.
\end{equation}
Then, let test {${\cal T}_2$} be defined such that it rejects $\mathrm{H}_0$ if $m \geq 1$.

Similar to test {${\cal T}_1$}, the probability of false alarm of test {${\cal T}_2$} can be controlled at level $\alpha$. On the other hand, we see that the detection power of test {${\cal T}_2$} when $(T^1, T^2) \sim \mathrm{P}_\mu$, where $\mathrm{P}_\mu = \mathcal{N}({(\mu_1, \mu_2)}, I)$, satisfies the following condition: 
\begin{equation}\label{bh_performance}
\mathrm{P_{\mu}}(\text{reject } \mathrm{H}_0) 
    \geq 1 - \min \{1 - \mathrm{\Psi}(\mathrm{\Psi}^{-1}(\alpha/2) - \mu_1), 1 - \mathrm{\Psi}(\mathrm{\Psi}^{-1}(\alpha/2) - \mu_2)\},
\end{equation}
where $\mathrm{\Psi}(.)$ is the complementary cumulative distribution function of a $\mathcal{N}(0,1)$ random variable.

Thus, the detection power satisfies a minimum quality of performance under all distributions {for} the alternate hypothesis. Note that it is also possible for some distributions {for} the alternate hypothesis that test {${\cal T}_1$ has better detection power than test ${\cal T}_2$ }\eqref{bh_performance}. For instance, if $(T^1, T^2) \sim \mathcal{N}({(1, 1)}, I)$, then the detection performance of test {${\cal T}_1$} is better than that of {${\cal T}_2$}. However, if we do not have any prior information on the behaviour of the statistics under the 
alternate hypotheses, combining multiple test statistics in an ad hoc manner {(such as summing them) might not be desirable}. Further, there is no obvious way to combine two completely different set of statistics, say the Mahalanobis scores from different layers of a DNN and the energy score. 

\subsection{Proposed OOD Detection Test}
Motivated by the above discussion, we propose the following multiple testing framework for OOD detection:
\begin{align}\label{multiple_test}
\begin{split}
        &\mathrm{H}_{0,1} : T^1_\mathrm{test} \sim \mathrm{P}^1 \qquad \mathrm{H}_{1,1} : T^1_\mathrm{test}\not\sim \mathrm{P}^1 \\ 
        &\vdots \\
        &\mathrm{H}_{0,K} : T^K_\mathrm{test} \sim \mathrm{P}^K \qquad \mathrm{H}_{1,K} : T^K_\mathrm{test} \not\sim \mathrm{P}^K,
\end{split}
\end{align}
where $\mathrm{P}^1, \ldots, \mathrm{P}^K$ are the distributions of the corresponding scores when $X_\mathrm{test}$ is an in-distribution sample as defined in \eqref{eq:OOD_formulation}. It is clear to see that if the new input $X_\mathrm{test}$ is an in-distribution sample, then all $\mathrm{H}_{0,i}$ are true in \eqref{multiple_test}, and if $X_\mathrm{test}$ is an OOD sample, then one or more of $(\mathrm{H}_{0,1}, \ldots, \mathrm{H}_{0,K})$ are likely to be false.  Thus, we propose a test that declares the instance as OOD, if any of the  $\mathrm{H}_{0,i}$ are rejected. 


We propose an algorithm for OOD detection inspired by the Benjamini-Hochberg (BH) procedure {given in} \cite{benjamini2001} (preliminaries are provided in the Appendix). Most multiple testing techniques, including the BH procedure, involve computing the p-values of the individual tests. {The p-value of a realization $t^i$} of the test statistic $T^i$, $i \in [K]$,  is given by 
\begin{equation}
    q^i = \mathrm{P_{H_{0,i}}}\left\{ T^i \geq  t^i \right\} = 1 - F_{\mathrm{H}_{0,i}}(t^i),
\end{equation}
where $F_{\mathrm{H}_{0,i}}(.)$ is the CDF of  $T^i$. The p-value for $T^i_\mathrm{test}$ is a random variable
\begin{equation} \label{eq:Qi}
    Q^i = 1 - F_{\mathrm{H}_{0,i}}(T^i_\mathrm{test}).
\end{equation}
The distribution of this p-value under null hypothesis is uniform over $[0,1]$. Its distribution under the alternate hypothesis concentrates around 0, and is difficult to characterize {in general}. Also, while a p-value close to 0 is evidence against the null hypothesis, a large p-value does not provide evidence in favor of the null hypothesis.

If we do not know the distributions under the null hypotheses to calculate the exact p-values, conformal inference methods suggest evaluating the empirical CDF of $T^i$ under the null hypothesis using a hold-out set (denoted by $\mathcal{T}_\mathrm{cal}$) known as the calibration set, to construct a \textit{conformal p-value} $\hat{Q}^i$. A conformal p-value satisfies the following property: 
\begin{equation}\label{eq:uniform_p_value}
    \mathrm{P_{H_{0,i}}} \left\{ \hat{Q}^i \leq t \right\} \leq t,
\end{equation}
when $X_\mathrm{test}$ is independent from $\mathcal{T}_\mathrm{cal}$ and $T^i$ has a continuous distribution. The classical conformal p-value {(see \cite{vovk})} is given by:
\begin{equation}
    \hat{Q}^ i = \frac{1 + |\{j \in \mathcal{T}_\mathrm{cal}: T^i_j \geq T^i_\mathrm{test}\}|}{1 + |\mathcal{T}_\mathrm{cal}|}.
\end{equation}
The estimate $\hat{Q}^i$ is said to be a marginally valid conformal p-value, as it depends on $\mathcal{T}_\mathrm{cal}$. In other words, \eqref{eq:uniform_p_value} can be rewritten as follows:
\begin{equation}\label{eq:uniform_p_value_expectation}
\mathrm{E}\left[ \mathrm{P_{H_{0,i}}} \left\{ \hat{Q}^i \leq t  |  \mathcal{T}_\mathrm{cal} \right\} \right] \leq t, 
\end{equation}
where the expectation is over all possible calibration datasets. The property in \eqref{eq:uniform_p_value} is however not valid conditionally, i.e., $\mathrm{P_{H_0}} \left\{ \hat{Q}^i \leq t  |  \mathcal{T}_\mathrm{cal} \right\}$ need not be {upper-bounded by} $t$. This is important to note, as false alarm guarantees given for out-of-distribution detection methods using conformal inference (see, e.g., \cite{icad, idecode}) are based on \eqref{eq:uniform_p_value_expectation}. Such guarantees are not strong, as they only guarantee that the probability of false alarm, averaged over all possible calibration data sets, is controlled. While the problem of conditional coverage has been discussed in the context of sequential testing for distribution shifts (e.g., \cite{podkopaev2021tracking}) {and} conformal inference (e.g., \cite{vovk2012conditional}), it has not been discussed widely under the setting of single sample OOD detection.

The related problem of outlier testing using conformal p-values is studied in \cite{bates}. However, the result from \cite{bates}, stating that conformal p-values satisfy the PRDS (Positive Regression Dependent on a Subset) property, which is required for the False Discovery Rate (FDR) control in the BH procedure,  is valid only under the setting where the individual test statistics (and hence the original p-values) are independent. 
The PRDS property does not hold for the conformal p-values $\hat{Q}^i$ in Algorithm 1, since the corresponding p-values $Q^i$ (see \eqref{eq:Qi}) are highly dependent through the common input $X_\mathrm{test}$. In addition, the conditional false alarm guarantees provided in \cite{bates} utilize calibration conditionally valid (CCV) p-values proposed in \cite{bates}, as opposed to the conformal values proposed in \cite{vovk} (which we use in our work). Indeed, these CCV p-values cannot be directly used in our setting to obtain false alarm guarantees in Theorem \ref{main_theorem}, without a similar adjustment to the thresholds as in \eqref{eq:C(K)}, as the p-values would be dependent through both the calibration dataset and the input. 



In our proposed OOD detection test we use conformal p-values in place of the actual p-values. In order to compute the conformal p-values, we maintain a calibration set $\mathcal{T}_\mathrm{cal}$. 

In this work, we aim to provide conditional false alarm guarantees, i.e., if $X_\mathrm{test}$ is an in-distribution sample (all $\mathrm{H}_{0,i}$ are true in \eqref{multiple_test}), then 
\begin{equation}\label{eq:define_P_F}
    \mathrm{P_F}(\mathcal{T}_\mathrm{cal}) = \mathrm{P_{H_0}}(\text{declare OOD } | \mathcal{T}_\mathrm{cal}) = \mathrm{P_{H_0}}(\text{reject at least one } \mathrm{H}_{0,i} | \mathcal{T}_\mathrm{cal})
\end{equation}
is controlled with high probability. As discussed earlier in this section, such conditional guarantees are essential for the safe deployment of OOD detection algorithms. 
Note that in the literature on multiple testing, the marginal false alarm probability $ \mathrm{P_{H_0}}(\text{declare OOD })$ is equivalent to the Family Wise Error Rate (FWER) or False Discovery Rate (FDR) when all the null hypotheses are true in \eqref{multiple_test} (detailed discussion provided in the Appendix).

We compute the scores of these $K$ statistics for the samples in the calibration set $\mathcal{T}_\mathrm{cal}$. Using these, we calculate the conformal p-values $\hat{Q}^1, \hat{Q}^2, \ldots, \hat{Q}^K$ for the new sample as in \eqref{eq:conformal_p_value}, and
order the conformal p-values in increasing order as $\hat{Q}^{(1)}, \hat{Q}^{(2)}, \ldots, \hat{Q}^{(K)}$. Let $\epsilon > 0$ be a parameter of the OOD detection algorithm, and let $\alpha > 0$, 
%
and let 
\begin{equation}\label{eq:m}
	m = \max \left\{i : \hat{Q}^{(i)} \leq \frac{\alpha i}{ C(K) K} \right\},
\end{equation}
where 
\begin{equation}\label{eq:C(K)}
    C(K) = (1+\epsilon)\sum_{j=1}^K \frac{1}{j}.
\end{equation}
The factor of $\sum_{j=1}^K \frac{1}{j}$ is {included in order} to obtain false alarm guarantees for any arbitrary dependence between the test statistics. The factor of $(1+\epsilon)$ is a constant related to the size of the calibration dataset, and is introduced to provide strong conditional false alarm guarantees, conditioned on the calibration set (discussed further in the proof of the results below). While choosing a smaller value of $\epsilon$ improves the power of the proposed OOD detection test, it increases the size of the calibration set needed to provide the conditional false alarm guarantees.
The OOD detection test declares the instance $X_\mathrm{test}$ as OOD if $m \geq 1$, i.e., if any of the $\mathrm{H}_{0,i}$ are rejected. The pseudo-code is described in Algorithm \ref{main}.

\begin{algorithm}[htb]
   \caption{BH based OOD detection test with conformal p-values}
   \label{main}
\begin{algorithmic}
   \STATE {\bfseries Inputs:} 
   \STATE New input $X_\mathrm{test}$; \\ Scores over $\mathcal{T}_{cal}$ as $\left\{ \{T^1_j = s^1(X_j) : j \in \mathcal{T}_\mathrm{cal}\}, \ldots, \{T^K_j = s^K(X_j) : j \in \mathcal{T}_\mathrm{cal}\} \right\}$; \\ ML model $f(\mathbf{W}, .)$; \\ Desired conditional probability of false alarm $\alpha \in (0,1)$.
   \STATE {\bfseries Algorithm:}
   \STATE For $X_\mathrm{test}$, compute scores $T^i_\mathrm{test}$.
   \STATE Calculate conformal p-values as:
   \begin{equation}
       \hat{Q}^ i = \frac{1 + |\{j \in \mathcal{T}_\mathrm{cal}: T^i_j \geq T^i_\mathrm{test}\}|}{1 + |\mathcal{T}_\mathrm{cal}|}.
   \end{equation}
   \STATE Order them as $\hat{Q}^{(1)} \leq  \hat{Q}^{(2)} \leq  \ldots  \leq \hat{Q}^{(K)}$.
   \STATE Calculate $m = \max \left\{i : \hat{Q}^{(i)} \leq \frac{\alpha i}{C(K) K} \right\}$.
   \STATE {\bfseries{Output:}} 
   \STATE Declare OOD if $m \geq 1$.
\end{algorithmic}
\end{algorithm}

For instance, consider a Deep Neural Network (DNN) with $L$ layers. Let $T^1, \ldots, T^L$ denote the Mahalanobis scores (\cite{mahala}). Let $T^{L+1}, \ldots, T^{2L}$ denote the Gram deviation scores (\cite{gram}). \cite{mahala} use outlier exposure to combine $T^1, \ldots, T^L$ into a single score for a threshold-based test, and the \cite{gram} use the sum of $T^{L+1}, \ldots, T^{2L}$ for a similar test. However, it is not straightforward to {determine} how to combine the $L$ Mahalanobis scores and the $L$ Gram deviation scores for OOD detection without outlier exposure. In Algorithm \ref{main}, we {provide} a systematic way to construct a test that uses all these $2L$ contrasting scores. In addition, we provide a systematic way to design the test thresholds to meet a given false alarm constraint as presented below in Theorem \ref{main_theorem}.

\subsection{Theoretical Guarantees}
On running Algorithm \ref{main}, we can guarantee that the conditional probability of the false alarm is bounded by $\alpha$ with high probability. In order to provide this guarantee, we need to enforce certain sample complexity conditions on the size of the calibration set $n_\mathrm{cal}$, as detailed in the Lemma below. 

\begin{lemma}\label{cal_assumption}
Let $\epsilon > 0$, $K$ and $\alpha$ be as in Algorithm 1. Let $a_j = \lfloor (n_\mathrm{cal} + 1) \frac{\alpha j}{C(K) K}  \rfloor$, $b_j = (n_\mathrm{cal}+1) - a_j$, and $\mu_j = \frac{a_j}{a_j + b_j}$. For a given $\delta >0$, let $n_\mathrm{cal}$ be such that
\begin{equation}\label{eq:n_cal}
    \min_{j = 1, 2, \ldots, K} I_{(1+\epsilon)\mu_j} (a_j, b_j) \geq 1 - \frac{\delta}{K^2}, 
\end{equation}
where $I_x(a,b)$ is the regularized incomplete beta function (the CDF of a Beta distribution with parameters $a, b$). Then for random variables $r^i_j \sim \mathrm{Beta} (a_j, b_j)$ for $j = 1, \ldots, K$, 
\begin{equation}
    P \left\{ \bigcap_{i=1}^K \bigcap_{j=1}^K \left\{r_j^i \leq (1 + \epsilon) \frac{\alpha j}{C(K) K} \right\} \right\} \geq 1 - \delta.
\end{equation}
\end{lemma}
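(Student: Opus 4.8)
The plan is to reduce the probabilistic claim to three ingredients: a deterministic inequality coming from the floor in the definition of $a_j$, the monotonicity of the Beta CDF, and a union bound over the $K^2$ events. No independence among the $r^i_j$ will be needed, since the union bound is insensitive to the joint law; the entire role of hypothesis \eqref{eq:n_cal} is to make each individual failure probability at most $\delta/K^2$.

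First I would establish the key deterministic inequality relating the actual threshold $(1+\epsilon)\frac{\alpha j}{C(K)K}$ to the argument $(1+\epsilon)\mu_j$ appearing in \eqref{eq:n_cal}. Since $a_j = \lfloor (n_\mathrm{cal}+1)\frac{\alpha j}{C(K)K}\rfloor$ and $a_j+b_j = n_\mathrm{cal}+1$, we have $\mu_j = \frac{a_j}{n_\mathrm{cal}+1} = \frac{1}{n_\mathrm{cal}+1}\lfloor (n_\mathrm{cal}+1)\frac{\alpha j}{C(K)K}\rfloor \le \frac{\alpha j}{C(K)K}$, where the inequality is just $\lfloor x\rfloor \le x$. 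Multiplying through by $(1+\epsilon)>0$ gives $(1+\epsilon)\mu_j \le (1+\epsilon)\frac{\alpha j}{C(K)K}$ for every $j$.

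Next I would turn the hypothesis into a per-variable tail bound. Because $r^i_j \sim \mathrm{Beta}(a_j,b_j)$, its CDF evaluated at $x$ is exactly $I_x(a_j,b_j)$, which is non-decreasing in $x$. Combining this monotonicity with the inequality from the previous step,
\[
P\Big(r^i_j \le (1+\epsilon)\tfrac{\alpha j}{C(K)K}\Big) = I_{(1+\epsilon)\frac{\alpha j}{C(K)K}}(a_j,b_j) \ge I_{(1+\epsilon)\mu_j}(a_j,b_j) \ge 1 - \frac{\delta}{K^2},
\]
where the final inequality is \eqref{eq:n_cal} applied termwise (it holds for each $j$ since it is stated for the minimum over $j$). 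Equivalently, each complementary event obeys $P\big(r^i_j > (1+\epsilon)\frac{\alpha j}{C(K)K}\big) \le \frac{\delta}{K^2}$ for every pair $(i,j)$.

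Finally I would take a union bound over the $K^2$ events indexed by $i,j \in \{1,\dots,K\}$, obtaining
\[
P\Big(\bigcup_{i=1}^K\bigcup_{j=1}^K \big\{r^i_j > (1+\epsilon)\tfrac{\alpha j}{C(K)K}\big\}\Big) \le \sum_{i=1}^K\sum_{j=1}^K \frac{\delta}{K^2} = \delta,
\]
and pass to complements to recover the stated bound of $1-\delta$. There is no genuinely hard step here: the proof is elementary once the setup is parsed. The only point requiring care is the direction of the floor inequality in the first step, which is precisely what ensures that moving the evaluation point of the Beta CDF from $(1+\epsilon)\mu_j$ up to the actual threshold $(1+\epsilon)\frac{\alpha j}{C(K)K}$ can only increase the probability; everything downstream is a monotonicity observation and a counting of $K^2$ terms.
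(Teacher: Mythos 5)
Your proposal is correct and follows essentially the same route as the paper's proof: the floor inequality $\lfloor x\rfloor \le x$ giving $\mu_j \le \frac{\alpha j}{C(K)K}$, the monotonicity of the regularized incomplete beta function to pass from $(1+\epsilon)\mu_j$ to the actual threshold, and a union bound over the $K^2$ events. The only difference is that you spell out the floor-inequality and monotonicity steps explicitly, which the paper compresses into the remark that $\mu_j$ is upper bounded by $\frac{\alpha j}{C(K)K}$.
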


\begin{proof}
    When the condition on $n_\mathrm{cal}$ in \eqref{eq:n_cal} is satisfied, we have that 
    \begin{equation}
        \begin{split}
             P \left\{ r_j^i \leq (1 + \epsilon) \frac{\alpha j}{C(K) K} \right\} &= I_{(1+\epsilon) \frac{\alpha j}{C(K) K}} (a_j, b_j) \\
             &\geq I_{(1+\epsilon) \mu_j} (a_j, b_j) \\
             &\geq 1 - \frac{\delta}{K^2},
        \end{split}
    \end{equation}
    where $I_x(a,b)$ is the CDF of a Beta distribution with parameters $a, b$, and the second inequality follows since $\mu_j$ is upper bounded by $\frac{\alpha j}{C(K) K}$. From the Union Bound, we have that, 
    \begin{equation}
        \begin{split}
            1 -  P \left\{ \bigcap_{i=1}^K \bigcap_{j=1}^K \left\{r_j^i \leq (1 + \epsilon) \frac{\alpha j}{C(K) K} \right\} \right\} &\leq \sum_{i=1}^K \sum_{j=1}^K P \left\{ r_j^i \geq (1 + \epsilon) \frac{\alpha j}{C(K) K} \right\} \\
            &\leq \sum_{i=1}^K \sum_{j=1}^K  \frac{\delta}{K^2} \\
            &\leq \delta.
        \end{split}
    \end{equation}

    Thus, we have the desired result in Lemma \ref{cal_assumption}.
\end{proof}

The condition on $n_\mathrm{cal}$ in Lemma \ref{cal_assumption} is due to the fact that the CDF of the conformal p-values conditioned on the calibration dataset follows a Beta distribution (see \cite{vovk}), and is essential to provide the guarantees in Theorem \ref{main_theorem}. Due to the form of the CDF of the Beta distribution, it is difficult to characterize the dependence of $n_\mathrm{cal}$ on $\alpha$, $\delta$, $\epsilon$ and $K$ in closed form. We plot the calibration dataset sizes $n_\mathrm{cal}$ as given by Lemma \ref{cal_assumption} for $\epsilon =1$ and $K = 5$ for different values of $\delta$ in Figure~\ref{fig:sample_complexity}. Note that $\epsilon = 1$ is conservative. 

\begin{figure}[h]
  \centering
  \subcaptionbox{{\small $\alpha$ = 0.1}}[.485\linewidth][c]{%
    \includegraphics[width=1\linewidth]{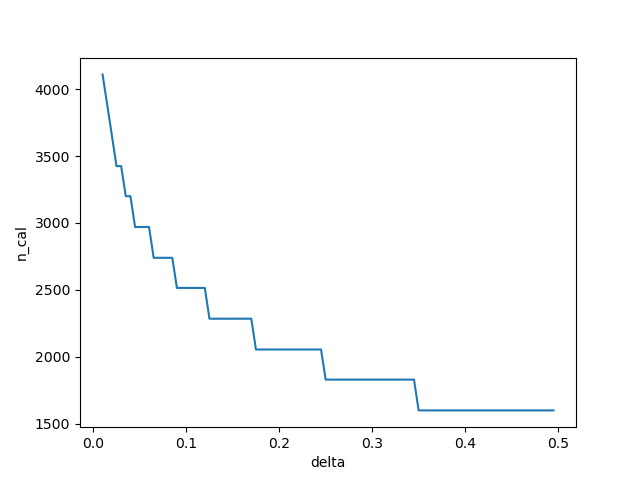}}\quad
  \subcaptionbox{{\small $\alpha$ = 0.05}}[.485\linewidth][c]{%
    \includegraphics[width=1\linewidth]{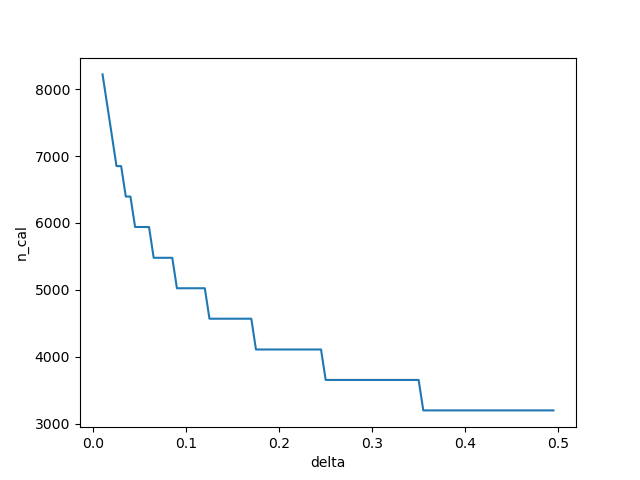}}\quad 
  \caption{Calibration dataset sizes that guarantees Theorem 1 with probability $1 - \delta$}
  \label{fig:sample_complexity}
\end{figure}

%
In the following result, we formally present the conditional false alarm guarantee for Algorithm~\ref{main}.
\begin{theorem}\label{main_theorem}
Let $\alpha, \delta \in (0,1)$. Let $\mathcal{T}_\mathrm{cal}$ be a calibration set,
and let $n_\mathrm{cal}$  be large enough (as defined in the Lemma \ref{cal_assumption}). Then, for a new input $X_\mathrm{test}$ and an ML model $f(\mathbf{W}, .)$, the probability of incorrectly detecting $X_\mathrm{test}$ as OOD conditioned on $\mathcal{T}_\mathrm{cal}$ while using Algorithm \ref{main} is bounded by $\alpha$, \textit{i.e.},
\begin{equation}
    \mathrm{P}_{\mathrm{F}}(\mathcal{T}_\mathrm{cal}) = \mathrm{P_{H_0}}\left({\text{declare OOD }  }  | \mathcal{T}_\mathrm{cal}\right) \leq \alpha, 
\end{equation}
with probability $1 - \delta$.
\end{theorem}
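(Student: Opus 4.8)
My plan is to reduce the conditional false-alarm probability to the family-wise error rate (FWER) of a Benjamini--Yekutieli step-up procedure, and then control that rate on a high-probability calibration event supplied by Lemma~\ref{cal_assumption}. Writing $t_j := \alpha j/(C(K)K)$ for the critical values in \eqref{eq:m}, observe that under $\mathrm{H}_0$ all $K$ nulls in \eqref{multiple_test} are true, so ``declare OOD'' ($m\ge 1$) is exactly the event that at least one true null is rejected; hence $\mathrm{P_F}(\mathcal{T}_\mathrm{cal})$ is the conditional FWER, which coincides with the conditional false discovery rate (FDR) when every null holds, since the false-rejection count equals the total rejection count. The first technical ingredient is that, conditioned on $\mathcal{T}_\mathrm{cal}$, each quantity $\mathrm{P_{H_{0,i}}}\{\hat{Q}^i \le t_j \mid \mathcal{T}_\mathrm{cal}\}$ is, as $\mathcal{T}_\mathrm{cal}$ varies, a $\mathrm{Beta}(a_j,b_j)$ random variable, by the probability-integral transform and the order-statistic representation of the conformal p-value (cf. \cite{vovk}); because $\hat{Q}^i$ is supported on multiples of $1/(n_\mathrm{cal}+1)$, the event $\{\hat{Q}^i \le t_j\}$ equals $\{\hat{Q}^i \le a_j/(n_\mathrm{cal}+1)\}$ with $a_j=\lfloor(n_\mathrm{cal}+1)t_j\rfloor$, matching the notation of Lemma~\ref{cal_assumption}. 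That lemma then guarantees that, on an event $\mathcal{E}$ with $\mathrm{P}(\mathcal{E})\ge 1-\delta$, the conditional super-uniformity bound $\mathrm{P_{H_{0,i}}}\{\hat{Q}^i \le t_j \mid \mathcal{T}_\mathrm{cal}\} \le (1+\epsilon)t_j$ holds simultaneously for all $i,j\in[K]$.

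Working on $\mathcal{E}$, I would then run the classical Benjamini--Yekutieli argument for arbitrary dependence. Let $B^i_k := \{\hat{Q}^i \le t_k\}\cap\{m=k\}$; since $m$ rejections means rejecting precisely the hypotheses with conformal p-value at most $t_m$, the conditional FDR decomposes as $\sum_{i=1}^{K}\sum_{k=1}^{K}\tfrac1k\,\mathrm{P}(B^i_k\mid\mathcal{T}_\mathrm{cal})$. The crux is the telescoping identity $\tfrac1k = \sum_{l=k}^{K}\bigl(\tfrac1l-\tfrac{1}{l+1}\bigr) + \tfrac{1}{K+1}$: substituting it and exchanging the order of summation rewrites the inner double sum as $\sum_{l=1}^{K}\bigl(\tfrac1l-\tfrac1{l+1}\bigr)\sum_{k=1}^{l}\mathrm{P}(B^i_k\mid\mathcal{T}_\mathrm{cal}) + \tfrac{1}{K+1}\sum_{k=1}^{K}\mathrm{P}(B^i_k\mid\mathcal{T}_\mathrm{cal})$. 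Because the $B^i_k$ are disjoint in $k$ and each forces $\hat{Q}^i \le t_k \le t_l$, the partial sums are bounded by $\mathrm{P_{H_{0,i}}}\{\hat{Q}^i\le t_l\mid\mathcal{T}_\mathrm{cal}\}\le(1+\epsilon)t_l$. Plugging in $t_l=\alpha l/(C(K)K)$ and using $\bigl(\tfrac1l-\tfrac1{l+1}\bigr)\,l=\tfrac{1}{l+1}$, the factor $(1+\epsilon)$ cancels against $C(K)=(1+\epsilon)\sum_{j}1/j$ in \eqref{eq:C(K)}, and a short computation collapses the whole expression to exactly $\alpha/K$ per null. Summing over the $K$ true nulls yields conditional $\mathrm{FDR}\le\alpha$, that is $\mathrm{P_F}(\mathcal{T}_\mathrm{cal})\le\alpha$ on $\mathcal{E}$, and $\mathrm{P}(\mathcal{E})\ge 1-\delta$ gives the stated ``with probability $1-\delta$.''

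The step I expect to be most delicate is not the dependence handling itself---the harmonic factor $\sum_{j}1/j$ in $C(K)$ is exactly what makes the telescoping cancellation go through for an \emph{arbitrary} joint law of $(\hat{Q}^1,\dots,\hat{Q}^K)$, which is essential here because the conformal p-values share the common input $X_\mathrm{test}$---but rather the passage from the marginal conformal validity \eqref{eq:uniform_p_value} to a bound valid \emph{conditionally} on $\mathcal{T}_\mathrm{cal}$. A single calibration draw does not give $\mathrm{P}(\hat{Q}^i\le t\mid\mathcal{T}_\mathrm{cal})\le t$; it holds only in expectation, which is precisely why the $(1+\epsilon)$ inflation and the Beta-tail sample-complexity condition \eqref{eq:n_cal} of Lemma~\ref{cal_assumption} are indispensable. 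Care is also needed to confirm that the discretization $a_j=\lfloor(n_\mathrm{cal}+1)t_j\rfloor$ is compatible with the support of $\hat{Q}^i$, so the Beta representation applies at each threshold $t_j$, and to verify that conditioning on $\mathcal{T}_\mathrm{cal}$ leaves $X_\mathrm{test}$ independent of it, so that the order-statistic argument underlying the Beta law remains valid.
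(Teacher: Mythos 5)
Your proposal is correct and takes essentially the same route as the paper's proof: your identity $\sum_{i} \mathrm{P}(B^i_k \mid \mathcal{T}_\mathrm{cal}) = k\,\mathrm{P}(m=k \mid \mathcal{T}_\mathrm{cal})$ is exactly the paper's Lemma~\ref{lem:lem3}, your telescoping $\tfrac1k = \sum_{l=k}^{K}\bigl(\tfrac1l-\tfrac1{l+1}\bigr)+\tfrac1{K+1}$ reproduces the paper's interval decomposition over $(\alpha_{j-1},\alpha_j]$ with Abel summation (the closed forms agree, since $\sum_{l=1}^{K}\tfrac{r^i_l}{l(l+1)}+\tfrac{r^i_K}{K+1}=\sum_{l=1}^{K-1}\tfrac{r^i_l}{l(l+1)}+\tfrac{r^i_K}{K}$), and the Beta-tail event of Lemma~\ref{cal_assumption} with the $(1+\epsilon)$ inflation is invoked identically, yielding the same per-null bound $\alpha/K$. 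You also correctly flag the one genuinely delicate point—that marginal conformal validity does not hold conditionally on a single calibration draw, which is precisely what the Beta representation and sample-complexity condition repair—so there are no gaps.
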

We adapt the proof of FDR control for the BH procedure provided in \cite{benjamini2001} for our algorithm, to the use of conformal p-values estimated from the calibration set instead of the actual p-values in Algorithm \ref{main}. The {details of the proof are} presented in the Appendix.

 \begin{figure}[h]
  \centering
  \subcaptionbox{{\small ResNet with CIFAR10}}[.485\linewidth][c]{%
    \includegraphics[width=1\linewidth]{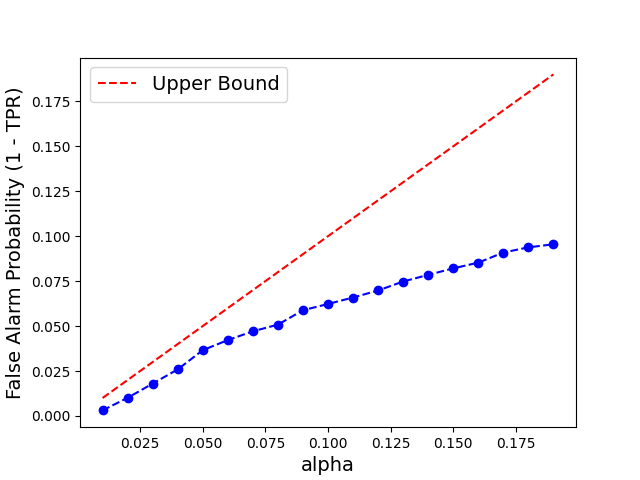}}\quad
  \subcaptionbox{{\small DenseNet with CIFAR10}}[.485\linewidth][c]{%
    \includegraphics[width=1\linewidth]{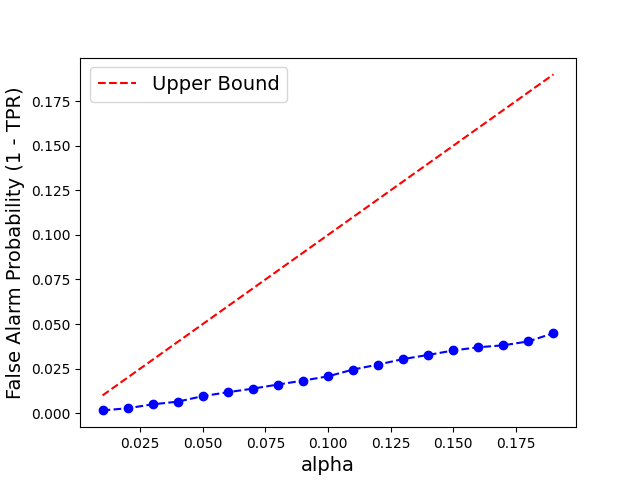}}\quad 
    \bigskip
  \subcaptionbox{{\small ResNet with SVHN}}[.485\linewidth][c]{%
    \includegraphics[width=1\linewidth]{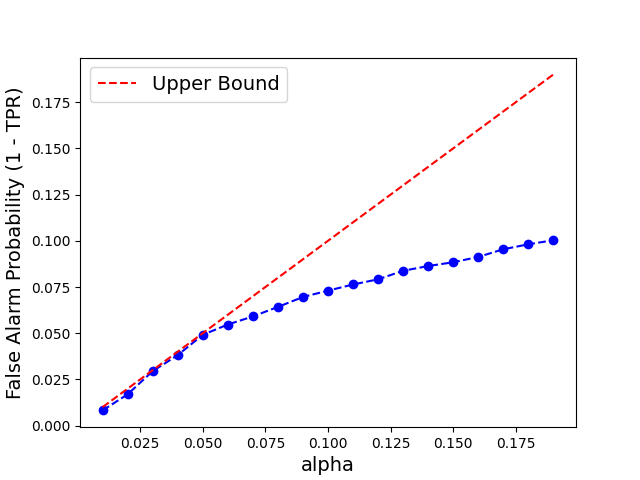}}
  \subcaptionbox{{\small DenseNet with SVHN}}[.485\linewidth][c]{%
    \includegraphics[width=1\linewidth]{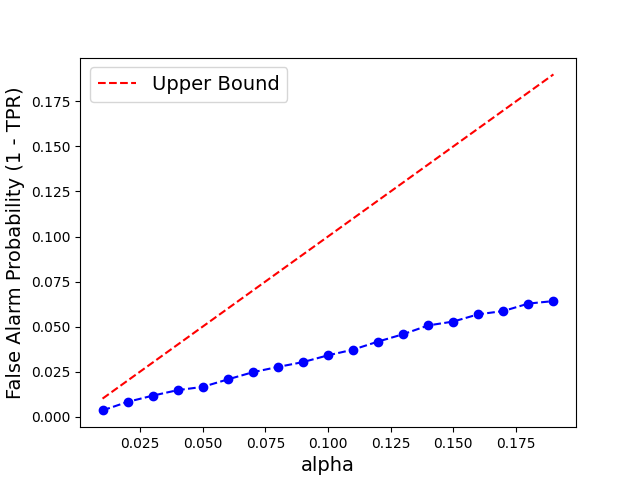}}
  \caption{False Alarm probabilities with CIFAR10 and SVHN as in-distribution datasets for ResNet and DenseNet}
  \label{fig:false_alarm}
\end{figure}

We verify the results in Theorem~\ref{main_theorem} through experiments with CIFAR10 and SVHN as in-distribution datasets, and ResNet and DenseNet architectures (more details on the experimental setup are given in Section \ref{experiments}). In Figure~\ref{fig:false_alarm}, we plot the false alarm probabilities when the thresholds for comparing the conformal p-values are set according to Algorithm~\ref{main}. The dashed line represents the theoretical upper bound on the false alarm probability. As seen in Figure~\ref{fig:false_alarm}, the false alarm probability is bounded by the theoretical upper bound as stated in Theorem~\ref{main_theorem} for all settings considered. Note that the results in this paper hold for any given ML model, and while the bound may be conservative for certain settings (e.g., DenseNet with CIFAR10), it is tight in other cases (e.g., ResNet with SVHN).

Such strong theoretical guarantees are absent in most prior work on OOD detection. A few works that have suggested the use of conformal p-values for OOD detection, such as \cite{idecode}, provide marginal false alarm guarantees of the form:
\begin{equation}
    \mathrm{P_{H_0}}\{\text{declare OOD }\} = \mathrm{E} \left[ \mathrm{P_{H_0}}\{\text{declare OOD } | \mathcal{T}_\mathrm{cal}\} \right] \leq \alpha 
\end{equation}
where the expectation is over all possible calibration sets. (See also the discussion surrounding \eqref{eq:uniform_p_value_expectation}.) However, this does not guarantee that the false alarm level $\alpha$ is maintained with high probability for the particular calibration dataset used. In addition, it does not provide any information on the size of the calibration dataset to be used. 


\section{Experimental Evaluation}\label{experiments}
In the previous section, we have provided guarantees on the strong probability of false alarm for Algorithm \ref{main}. However, since it is not possible to theoretically analyze the power of such a test (due to the structure of the alternate hypothesis), we evaluate the power of our proposed approach through experiments. In addition, since we do not know beforehand what kind of OOD samples might arise at inference time, an effective OOD detection test must also have low variance across different OOD datasets, for a given Deep Neural Network (DNN) architecture. In our experiments, we evaluate both of these metrics to demonstrate the effectiveness of our approach. 

Following the standard protocol for OOD detection~(\cite{mahala, gram, energy}), we consider settings with CIFAR10 and SVHN as the in-distribution datasets.
\begin{itemize}
    \item For CIFAR10 as the in-distribution dataset, we study SVHN, LSUN, ImageNet, and iSUN as OOD datasets.
    \item For SVHN as the in-distribution dataset, we study LSUN, ImageNet, CIFAR10 and iSUN as OOD datasets.
\end{itemize} 

We evaluate the performance on two pre-trained architectures: ResNet34~(\cite{he2016deep}) and  DenseNet~(\cite{huang2017densely}). The calibration dataset in each case is a subset of 5000 samples of the in-distribution training dataset. 

We evaluate the proposed approach, and compare it with SOTA methods based on the standard metric of probability of detection $\mathrm{P_D}$ or power (i.e., probability of correctly detecting an OOD sample) at probability of false alarm $\mathrm{P_F}$ at 0.1. Note that in some prior on OOD detection, the probability of detection is referred to as the True Negative Rate (TNR) and $1 - \mathrm{P_F}$ as the True Positive Rate (TPR), where the in-distribution samples are {considered} positives, and OOD samples are {considered} negatives.  

Recall that we focus exclusively on methods that do \textbf{not} have any outlier exposure to OOD samples, such as those in \cite{oe, odin, mahala}, and can be applied to \emph{any} pre-trained ML model. We compare our approach against baselines:  Mahalanobis~(\cite{mahala}),  Gram matrix (\cite{gram}), and Energy (\cite{energy}). For the Mahalanobis baseline, we use the scores from the penultimate layer of the network to maintain uniformity. 

To evaluate our proposed method, we systematically combine the following test statistics using our multiple testing approach as detailed in Algorithm \ref{main}: 
\begin{enumerate} 
    \item Mahalanobis distances from individual DNN layers~(\cite{mahala}): Let $g_i(X), i = 1, \ldots, L$ denote the outputs of the intermediate layers of the neural network for an input $X$. We estimate $\mu^c_i$, the class-wise mean of $g_i(.)$, as the empirical class-wise mean from the training dataset:
    \begin{equation}
        \mu^c_i = \frac{1}{n_c} \sum_{j: Y_j = c} g_i(X_j),
    \end{equation}
    where $n_c$ is the number of points with label $c$. We estimate the common covariance $\Sigma$ for all classes as 
    \begin{equation}
        \Sigma = \frac{1}{n_c} \sum_{c} \sum_{j: Y_j = c} \left(g_i(X_j) - \mu_c \right) \left(g_i(X_j) - \mu_c \right)^T.
    \end{equation}
    This is equivalent to fitting the class-
    conditional Gaussian distributions with a tied covariance. The Mahalanobis score for layer $i$ is calculated as:
    \begin{equation}
        \max_c - \left(g_i(X_j) - \mu_c \right) \Sigma^{-1} \left(g_i(X_j) - \mu_c \right)^T. 
    \end{equation}
    We calculate 5 Mahalanobis scores from the intermediate layers for the ResNet34 architecture, and 4 scores for the DenseNet architecture.
    \item Gram matrix deviations from the individual  DNN layers~(\cite{gram}): For each intermediate layer $i$, the Gram matrix of order $p$ is calculated as:
    \begin{equation}
        M^p_i(x) = \left( g^p_i {g^p_i}^T \right)^{\frac{1}{p}},
    \end{equation}
    where the power is calculated element-wise. For each flattened upper triangular Gram matrix $\overline{M^p_i}$, there are $n_i$ correlations. The class-specific minimum and maximum values for the correlation $j$ (i.e., $j$-th element of $\overline{M^p_i}$), class $c$, layer $i$ and power $p$  are estimated from the training dataset as $\text{min}[c][i][p][j]$ and  $\text{max}[c][i][p][j]$, respectively. For a new input $X$, the deviation for correlation $j$, layer $i$, power $p$ is calculated with respect to the predicted class $c_X$ as 
    \begin{equation}
        \delta_X(i,p,j) = \begin{cases}
            \frac{\overline{M^p_i(X)}[j] - \text{min}[c_X][i][p][j]}{|\text{min}[c_X][i][p][j]|} \;\;\; \text{if } \overline{M^p_i(X)}[j] >  \text{min}[c_X][i][p][j] \\ \noalign{\vskip5pt}
            \frac{\text{max}[c_X][i][p][j] - \overline{M^p_i(X)}[j] }{|\text{max}[c_X][i][p][j]|} \;\;\; \text{if } \overline{M^p_i(X)}[j] < \text{max}[c_X][i][p][j] 
            \\ \noalign{\vskip5pt}
            0 \;\;\;\;\;\;\;\;\;\;\;\;\;\;\;\;\;\;\;\;\;\;\;\;\;\;\;\;\;\;\;\;\;\;\;\text{otherwise}.
        \end{cases}
    \end{equation}
    As proposed in \cite{gram}, the Gram matrix score for layer $i$ is then calculated as the sum of $\delta_X(i,p,j)$ over values of $p$ from 1 to 10, and all values of $j$, and normalized by the empirical mean of $\delta_X(i,p,j)$. We calculate 5 Gram scores from the intermediate layers for the ResNet34 architecture, and 4 scores for the DenseNet architecture.
    \item Energy statistic~(\cite{energy}): The energy score is a temperature scaled log-sum-exponent of the softmax scores
    \begin{equation}
     -T \log \sum_{i=1}^C e^{\sigma_i(X) /T},
    \end{equation}
    where $C$ is the number of classes, $\sigma_i(.)$ are the softmax scores, and $T$ is the temperature parameter. In our experiments, we set the temperature $T$ to 100 for all in-distribution datasets, DNN architectures and OOD datasets (as stated in \cite{energy}, the energy score is not sensitive to the temperature parameter).
\end{enumerate}

We use a subset of 45000 points from the training dataset (with no overlap with the calibration dataset) to calculate the class-wise empirical means and covariance for the Mahalanobis scores, and the minimum and maximum correlations for the Gram scores.

\begin{figure}[t]
  \centering
  \subcaptionbox{{\small ResNet with CIFAR10}}[.485\linewidth][c]{%
    \includegraphics[width=1\linewidth]{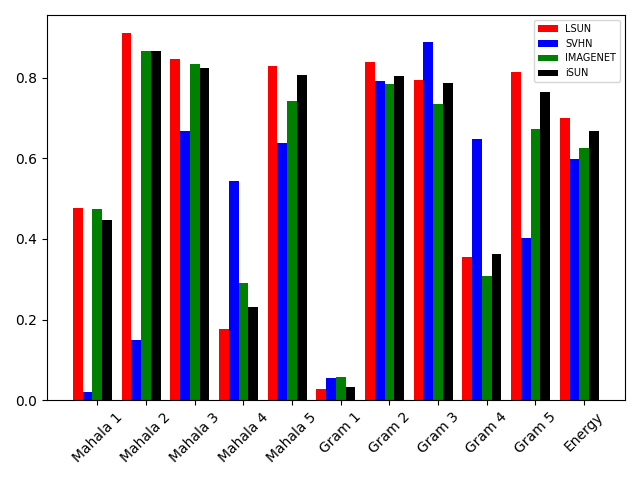}}\quad
  \subcaptionbox{{\small DenseNet with CIFAR10}}[.485\linewidth][c]{%
    \includegraphics[width=1\linewidth]{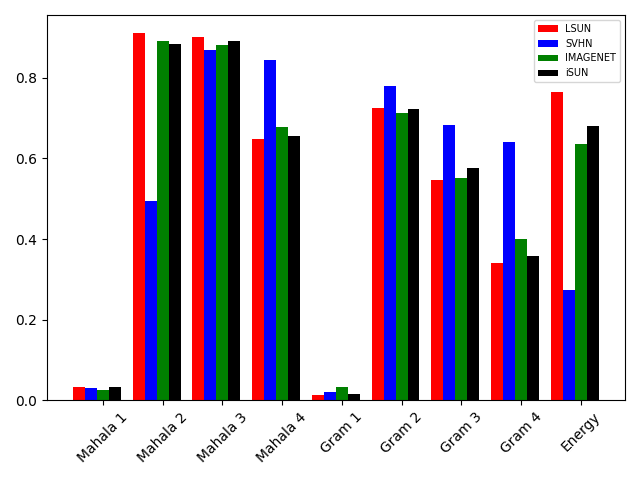}}\quad 
    \bigskip
  \subcaptionbox{{\small ResNet with SVHN}}[.485\linewidth][c]{%
    \includegraphics[width=1\linewidth]{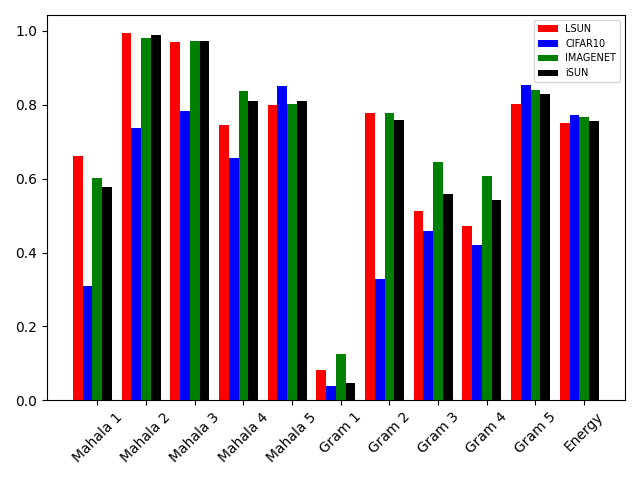}}
  \subcaptionbox{{\small DenseNet with SVHN}}[.485\linewidth][c]{%
    \includegraphics[width=1\linewidth]{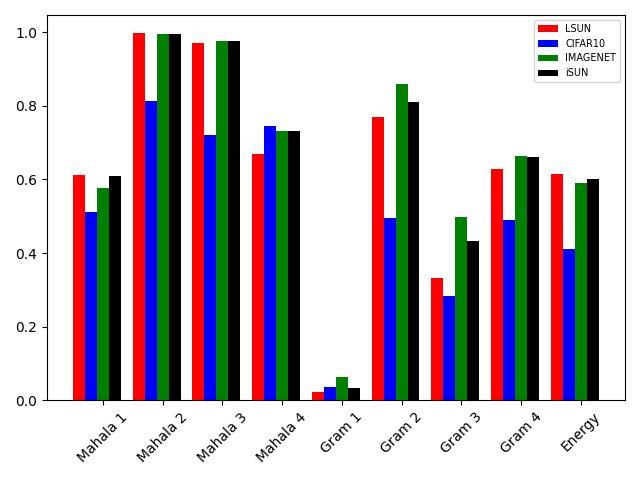}}
  \caption{Probabilities of scores rejected with CIFAR10 and SVHN as in-distribution datasets for ResNet and DenseNet}
  \label{fig:rejectde_stats}
\end{figure}

For CIFAR10 and SVHN as the in-distribution datasets, we use our proposed method in Algorithm \ref{main} to combine the Mahalanobis scores and Gram scores across layers, and the energy score, to detect OOD samples. There are 11 scores (i.e., $K = 11$) in total for the ResNet34 architecture, and 9 scores (i.e., $K = 9$) for the DenseNet architecture. Recall that Algorithm \ref{main} declares an input to be an OOD sample if any of the $K$ null hypotheses corresponding to the $K$ scores are rejected. For different OOD datasets, we empirically study the probability of each null hypotheses being rejected by Algorithm \ref{main}. In Figure \ref{fig:rejectde_stats}, we plot the empirical probability of each score $i$ being rejected, i.e., the proportion of data points in each OOD dataset for which the corresponding null hypothesis $H_{0,i}$ was rejected. The Mahalanobis score and Gram score of layer $i$ are denoted by `Mahala i' and `Gram i' respectively, and the energy score is denoted by `Energy'. We observe that while the probability of a score being rejected is high for certain OOD datsets, there exists OOD instances for which it is quite low. For example,  in the Resnet34 architecture with CIFAR10 as the in-distribution dataset, while the Mahalanobis scores of layers 2, 3 and 5, and the Gram scores of layer 5 are useful to detect OOD instances from the LSUN, Imagenet and iSUN datasets, they are not likely to be useful in detecting OOD instances from the SVHN dataset. On the other hand, the Mahalanobis and Gram scores from layer 4 of the network are more useful in detecting OOD instances from the SVHN dataset than LSUN, Imagenet and iSUN datasets. 
This study provides evidence that any single score may not be useful to detect all kinds of OOD instances that an ML model might encounter at inference time, and combining different scores systematically, as proposed in Algorithm \ref{main},  might lead to a more robust OOD detection method. We demonstrate an improvement in detection performance and the robustness of our proposed OOD detection method through extensive experiments presented further in this section.

The detection power performances for CIFAR10 and SVHN as in-distribution datasets are presented in Tables~\ref{table:cifar10_P_D} and \ref{table:svhn_P_D}, for the Mahalanobis, Gram and Energy baselines, and our proposed method of combining different statistics.  We annotate our method with the {number} of statistics used, e.g., Mahalanobis, Gram and Energy (5/4+5/4+1) uses 5,4 layers in ResNet34, DenseNet  architectures respectively, for both Mahalanobis and Gram, and the energy score. For each in-distribution dataset, we consider 8 cases, comprising of 4 OOD Datasets and 2 different DNN architectures.

\begin{table*}
\caption {{\small Comparison with  SOTA methods for CIFAR10 as in-distribution. Each entry is {\bf $\mathrm{P_D} (\%)$ at $\mathrm{P_F} = 10\%$}. We annotate our method with the no. of statistics used, e.g., Mahalanobis, Gram and Energy (5/4+5/4+1) uses 5,4 layers in ResNet34,DenseNet  architectures respectively, for both Mahalanobis and Gram, and the energy score.}}
\begin{center}
\begin{adjustbox}{width=0.6\columnwidth}
\begin{tabular}{clcccc}
\label{table:ablation-svhn}
\\ \hline \hline \\
\multirow{2}{*}{}   {\bf OOD Dataset} & {\bf Method} & {\bf ResNet34} & {\bf DenseNet}  \\
\\ \hline \hline \\
      &   Mahala (penultimate layer)                     &82.77  &{92.98}   \\
       &   Gram (sum across layers)	                                      & {96.04}  &89.97 \\
 SVHN &   Energy	                                      &{73.21}  & {42.40}   \\
       &   Ours - Mahala (5/4)                          &87.92  &{93.16}    \\
       &   Ours - Gram (5/4)                            &95.61  &89.90     \\
      &   Ours - Mahala, Energy (5/4 + 1)              &91.88  &{94.03} \\
       &   Ours - Gram, Energy (5/4 + 1)                &96.78  &90.77  \\
       &   Ours - Mahala, Gram (5/4 + 5)                &96.23  &94.21  \\
       &   Ours - Mahala, Gram and Energy (5/4+5/4+1) &{97.13}  &94.57 \\
       \hline \\
   &   Mahala (penultimate layer)                 &85.45  &82.81  \\
       &   Gram (sum across layers)	                                      &{92.34}  &{80.04}    \\
ImageNet &   Energy	                                      &{76.76}   & {94.93}    \\
       &   Ours - Mahala (5/4)                          &96.90  &95.19    \\
       &   Ours - Gram (5/4)                            &92.60  & 80.12     \\
       &   Ours - Mahala, Energy (5/4 + 1)              &97.28  &{98.09}  \\
       &   Ours - Gram, Energy (5/4 + 1)                &94.53  & 95.19 \\
       &   Ours - Mahala, Gram (5/4 + 5)                &96.38  & 92.81 \\
       &   Ours - Mahala, Gram and Energy (5/4+5/4+1) &{97.03}  & {97.20}  \\
       \hline \\
   &   Mahala  (penultimate layer)                    &90.97  & {84.11}    \\
       &   Gram (sum across layers)	                                      &{95.94}  & 81.83 \\
LSUN &   Energy	                                      &{81.16}   & {96.89}  \\
       &   Ours - Mahala (5/4)                          &{98.11}  &96.38  \\
       &   Ours - Gram (5/4)                            &96.16  &81.67  \\
       &   Ours - Mahala, Energy (5/4 + 1)              &97.87  & {98.20} \\
       &   Ours - Gram, Energy (5/4 + 1)                &96.61  &96.43   \\
       &   Ours - Mahala, Gram (5/4 + 5/4)            &98.02  &94.40  \\
       &   Ours - Mahala, Gram and Energy (5/4+5/4+1)  &{98.00} & {97.78}  \\
       \hline \\
   &   Mahala (penultimate layer)               &  {89.99}   &	 {83.19}  \\           &Gram (sum across layers)    &95.10  & 81.47    \\
iSUN &   Energy	                                      &{80.11}  & {95.10}  \\
       &   Ours - Mahala (5/4)                          &97.24  &95.26   \\
       &   Ours - Gram (5/4)                            &95.11  &81.09    \\
       &   Ours - Mahala, Energy (5/4 + 1)              &{97.17}  & {97.12}     \\
       &   Ours - Gram, Energy (5/4 + 1)                &96.19  & 94.73   \\
       &   Ours - Mahala, Gram (5/4 + 5/4)            &97.36 & 92.93  \\
      &   Ours - Mahala, Gram and Energy (5/4+5/4+1) &{97.67}  &96.34 \\
       \hline
       \hline 
\end{tabular}
\end{adjustbox}
\end{center}
\label{table:cifar10_P_D}
\end{table*}

\begin{table*}
\caption {{\small Comparison with  SOTA methods for SVHN as in-distribution. Each entry is {\bf $\mathrm{P_D} (\%)$ at $\mathrm{P_F} = 10\%$}. We annotate our method with the no. of statistics used, e.g., Mahalanobis, Gram and Energy (5/4+5/4+1) uses 5,4 layers in ResNet34,DenseNet  architectures respectively, for both Mahalanobis and Gram, and the energy score.}}
\begin{center}
\begin{adjustbox}{width=0.6\columnwidth}
\begin{tabular}{clcccc}
\\ \hline \hline \\
\multirow{2}{*}{}   {\bf OOD Dataset} & {\bf Method} & {\bf ResNet34} & {\bf DenseNet}  \\
\\ \hline \hline \\
    &   Mahala (penultimate layer)                 &96.12  &96.34  \\
      &   Gram (sum across layers)	                &97.52  &93.57   \\
ImageNet  &   Energy	                                      &85.14 &70.53   \\
       &   Ours - Mahala (5/4)                          &99.91 &99.95   \\
       &   Ours - Gram (5/4)                            &97.68 &94.38     \\
       &   Ours - Mahala, Energy (5/4 + 1)              &99.89  &99.93   \\
       &   Ours - Gram, Energy (5/4 + 1)                &97.85  &95.01  \\
       &   Ours - Mahala, Gram (5/4 + 5/4)                &99.83   &99.91 \\
      &   Ours - Mahala, Gram and Energy (5/4+5/4+1) 
       &99.84   &99.89  \\
       \hline \\
 &   Mahala  (penultimate layer)                    &93.74   & 94.17  \\
 &   Gram (sum across layers)	            
      & 96.20   &88.25 \\
     LSUN &   Energy	                                   &81.30   &71.36  \\
       &   Ours - Mahala (5/4)                          &99.98   &100.0   \\
       &   Ours - Gram (5/4)                            &96.54   &89.02   \\
        &   Ours - Mahala, Energy (5/4 + 1)              &99.96   &99.99 \\
       &   Ours - Gram, Energy (5/4 + 1)                &96.82   &90.56  \\
       &   Ours - Mahala, Gram (5/4 + 5/4)            &99.96   &99.98 \\
       &   Ours - Mahala, Gram and Energy (5/4+5/4+1)  &99.95   &100.0  \\
       \hline \\
  &   Mahala (penultimate layer)                &95.23   &96.01 \\
     &Gram (sum across layers)    
    &96.50   &91.46   \\
 iSUN  &   Energy	                                &82.79   &71.20    \\
     &   Ours - Mahala (5/4)                         &99.98   &100.0  \\
      &   Ours - Gram (5/4)                            &96.80   &91.89   \\
       &   Ours - Mahala, Energy (5/4 + 1)              &99.93   &100.0  \\
       &   Ours - Gram, Energy (5/4 + 1)                &97.21   &92.69  \\
       &   Ours - Mahala, Gram (5/4 + 5/4)            &99.88   &99.98 \\
       &   Ours - Mahala, Gram and Energy (5/4+5/4+1) 
      & 99.88   &99.98 \\
       \hline \\
 &   Mahala (penultimate layer)                     &96.09  & 94.25   \\
       &   Gram (sum across layers)	            
       &91.58   &69.77    \\
  CIFAR10 &   Energy	                              &83.31  &54.07   \\
      &   Ours - Mahala (5/4)                
       &98.31   &97.64  \\
      &   Ours - Gram (5/4)                  
       &92.39   &72.84     \\
        &   Ours - Mahala, Energy (5/4 + 1)              &98.13  &97.16 \\
       &   Ours - Gram, Energy (5/4 + 1)                &92.91  &78.03 \\
       &   Ours - Mahala, Gram (5/4 + 5)                &97.15  &94.83  \\
      &   Ours - Mahala, Gram and Energy (5/4+5/4+1) &97.35  &95.23   \\
       \hline \hline \\
\end{tabular}
\end{adjustbox}
\end{center}
\label{table:svhn_P_D}
\end{table*}

\begin{enumerate}
    \item \textbf{Improvement in probability of detection across OOD datasets and DNN architectures:} The best probability of detection in all 8 cases with CIFAR10 as in-distribution correspond to our method of combining statistics. 
    Similarly, with SVHN as in-distribution, our method of combining statistics gives the best probability of detection in all 8 cases.
    
    Thus, {our approach leads to an improvement} across OOD datasets and DNN architectures.

    \item \textbf{Lower variation in detection probability across OOD datasets and DNN architectures:} {Detection probabilities of baselines Mahalanobis, Gram and Energy exhibit a much higher variation across different kinds of OOD samples} as compared to the combination of all statistics.
    
    With CIFAR10 as the in-distribution dataset, for the ResNet34 architecture: the variation in $\mathrm{P_D}$ is $82.77-90.97$ for the Mahalanobis baseline, $92.34-96.04$ for the Gram baseline, and $73.21-81.16$ for the energy baseline. In contrast, our method of combining all statistics has a variation of $97.03-98.00$. For DenseNet, the variation in $\mathrm{P_D}$ is $82.81-92.98$ for the Mahalanobis baseline, $80.04-89.97$ for the  Gram baseline, and $42.40-96.89$ for the energy baseline. Our method of combining all statistics has a variation of $94.57-97.78$. 
    
    A similar trend is seen with SVHN as the in-distribution dataset. Our method reduces the variation across different kinds of OOD samples by almost 5X. This is a key improvement, as the kind of OOD samples encountered at inference time is unknown, and our proposed method shows very little variation across different OOD datasets.

    \item \textbf{Impact of combining all the scores:} For CIFAR10 as the in-distribution dataset, in all 8 cases, combining all the scores - Mahalanobis and gram from individual layers, and the energy score, is either the best method, or within $1\%$ of the best performance. 
    
    Similarly, with SVHN as the in-distribution dataset, in 7 out of 8 cases, combining all the scores is either the best method, or within $1\%$ of the best performance (the gap is $2.41\%$ in the remaining case). 
    
    Thus, in contrast to existing methods,  {combining all the statistics using Algorithm \ref{main} is robust to different kinds of OOD samples across DNN architectures.}
    
    
\end{enumerate}

\begin{table*}[ht]
\caption {{\small Comparison with naive averaging rule for CIFAR10 and SVHN as in-distribution. Each entry is {\bf $\mathrm{P_D} (\%)$ at $\mathrm{P_F} = 10\%$}.}}
\begin{center}
\begin{adjustbox}{width=0.6\columnwidth}
\begin{tabular}{clccccc}
\\ \hline \hline \\
\multirow{2}{*}{}  {\bf In-distribution Dataset} & {\bf OOD Dataset} & {\bf Method} & {\bf ResNet34} & {\bf DenseNet}  \\
\\ \hline \hline \\
  
CIFAR10      &SVHN &   Naive                             &{81.13}  & 83.28   \\
      & &   Ours   &97.13 &{94.57}    \\
       \hline \\

CIFAR10      &ImageNet &   Naive                           &{86.45}   & {80.96}    \\
      & &   Ours           &97.03  &97.20    \\
       \hline \\
       
CIFAR10      &LSUN &   Naive                                 &{91.31}   & {83.79}  \\
      & &  Ours            &{98.00}  &97.78  \\
       \hline \\

CIFAR10 &iSUN &   Naive                               &{89.22}  & {81.70}  \\
      & &   Ours   &97.67  &96.34   \\
       \hline
       \hline \\
 
SVHN     &ImageNet  &   Naive                                &97.08 &95.67   \\
      & &   Ours       &99.84 &99.89   \\
       \hline \\

     SVHN  &LSUN &   Naive                              &95.00   &92.81  \\
     &  &   Ours    &99.95   &100.0   \\
        \hline \\

SVHN    & iSUN  &   Naive                         &96.00   &94.53    \\
     &   &   Ours         &99.88   &99.98  \\
        \hline \\

    SVHN  & CIFAR10 &   Naive  	                              &86.10  &77.22   \\
    &  &   Ours                   
       &97.35   &95.23  \\
       \hline \hline \\
\end{tabular}
\end{adjustbox}
\end{center}
\label{table:svhn_and_cifar10_naive}
\end{table*}

\begin{table*}[h]
\caption {{\small Comparison with  SOTA OOD detection techniques for CIFAR10 as in-distribution dataset. Each entry is {\bf AUROC} for the corresponding detection method, OOD dataset and DNN architecture.}}
\begin{center}
\begin{adjustbox}{width=0.6\columnwidth}
\begin{tabular}{clcccc}
\\ \hline \\
\multirow{2}{*}{}  {\bf OOD Dataset} & {\bf Method} & {\bf ResNet34} & {\bf DenseNet}  \\
\\ \hline \\
SVHN   &   Mahala (penultimate layer)                               &93.86  & 96.72  \\
      &   Gram (sum across layers)	                                      &97.28  &94.31    \\
      &   Energy	                                      &90.24  &77.92    \\
      &   Ours - Mahala (5/4)                          &95.34  & 96.70   \\
      &   Ours - Gram (5/4)                            &97.47  & 94.28    \\
      &   Ours - Mahala, Energy (5/4 + 1)              &95.84  &96.99 \\
      &   Ours - Gram, Energy (5/4 + 1)                &97.90  & 96.20   \\
      &   Ours - Mahala, Gram (5/4 + 5/4)         &97.56 & 96.98  \\
      &   Ours - Mahala, Gram and Energy (5/4+5/4+1) &97.72  & 97.24  \\
      \hline \\
ImageNet   &   Mahala (penultimate layer)                 &94.84  &93.12  \\
      &   Gram 	(sum across layers)                                      &95.90  &89.83   \\
      &   Energy	                                      &91.40  &96.03    \\
      &   Ours - Mahala (5/4)                          &97.89  &97.32  \\
      &   Ours - Gram (5/4)                            &96.09  &89.75    \\
      &   Ours - Mahala, Energy (5/4 + 1)              &97.97  &98.13   \\
      &   Ours - Gram, Energy (5/4 + 1)                &97.07  & 96.79  \\
      &   Ours - Mahala, Gram (5/4 + 5/4)                &97.55  & 96.67 \\
      &   Ours - Mahala, Gram and Energy (5/4+5/4+1) &97.63  & 97.70  \\
      \hline \\
LSUN   &   Mahala  (penultimate layer)                    &96.28  &90.00   \\
      &   Gram 	  (sum across layers)                    &97.31  & 87.97  \\
      &   Energy	                                      &92.35  &96.83    \\
      &   Ours - Mahala (5/4)                          &98.20  &97.54  \\
      &   Ours - Gram (5/4)                            &97.46  & 87.76   \\
      &   Ours - Mahala, Energy (5/4 + 1)              &98.07  & 98.16 \\
      &   Ours - Gram, Energy (5/4 + 1)                &97.76  & 97.14  \\
      &   Ours - Mahala, Gram (5/4 + 5/4)            &97.99  & 96.82   \\
      &   Ours - Mahala, Gram and Energy (5/4+5/4+1)  &97.96 & 97.74  \\
      \hline \\
iSUN   &   Mahala  (penultimate layer)                    &96.07  &93.71    \\
      &   Gram 	  (sum across layers)                   &97.01  & 90.48    \\
      &   Energy	                                      &92.05  & 96.25    \\
      &   Ours - Mahala (5/4)                          &97.95  & 97.39  \\
      &   Ours - Gram (5/4)                            &97.15  & 90.36   \\
      &   Ours - Mahala, Energy (5/4 + 1)              &97.93  & 97.89 \\
      &   Ours - Gram, Energy (5/4 + 1)                &97.66  & 96.71  \\
      &   Ours - Mahala, Gram  (5/4 + 5/4)            &97.79  & 96.76 \\
      &   Ours - Mahala, Gram and Energy (5/4+5/4+1)  &97.83 &  97.47  \\
      \hline \\
\end{tabular}
\end{adjustbox}
\end{center}
\label{table:auroc_cifar10}
\end{table*}

\begin{table*}[h]
\caption {{\small  Comparison with  SOTA OOD detection techniques for SVHN as in-distribution dataset. Each entry is {AUROC} for the corresponding detection method, OOD dataset and DNN architecture.}}
\begin{center}
\begin{adjustbox}{width=0.6\columnwidth}
\begin{tabular}{clcccc}
\\ \hline \\
\multirow{2}{*}{}  {\bf OOD Dataset} & {\bf Method} & {\bf ResNet34} & {\bf DenseNet}  \\
\\ \hline \\
LSUN   &   Mahala (penultimate layer)               &96.06 &96.22  \\
      &   Gram 	                                      &97.23 &94.17   \\
      &   Energy	                               &87.58 &86.01   \\
      &   Ours - Mahala (5/4)                          &99.00 &98.92   \\
      &   Ours - Gram (5/4)                            &97.19 &94.11    \\
      &   Ours - Mahala, Energy (5/4 + 1)              &98.76 &98.94 \\
      &   Ours - Gram, Energy (5/4 + 1)                &97.47 & 95.69  \\
      &   Ours - Mahala, Gram  (5/4 + 5/4)         &98.82 &99.08 \\
      &   Ours - Mahala, Gram and Energy (5/4+5/4+1) &98.21 &99.06  \\
      \hline \\
ImageNet   &   Mahala (penultimate layer)                 &96.81 &97.01   \\
      &   Gram 	                                      &97.75 &96.34  \\
      &   Energy	                                      &90.33 &85.76    \\
      &   Ours - Mahala (5/4)                          &98.99 &98.89    \\
      &   Ours - Gram (5/4)                            &97.73 &96.32   \\
      &   Ours - Mahala, Energy (5/4 + 1)              &98.79 & 98.91 \\
      &   Ours - Gram, Energy (5/4 + 1)                &98.01 &97.11  \\
      &   Ours - Mahala, Gram  (5/4 + 5/4)                &98.87  &99.04  \\
      &   Ours - Mahala, Gram and Energy (5/4+5/4+1) & 98.25 & 99.02 \\
      \hline \\
iSUN   &   Mahala  (penultimate layer)                    &96.49  & 96.85    \\
      &   Gram 	                                      &97.40  & 95.47    \\
      &   Energy	                                      &88.75  &  85.69  \\
      &   Ours - Mahala (5/4)                          &98.99  & 98.91  \\
      &   Ours - Gram (5/4)                            &97.37  & 95.42  \\
      &   Ours - Mahala, Energy (5/4 + 1)              &98.76 & 98.94 \\
      &   Ours - Gram, Energy (5/4 + 1)                &97.63 & 96.40  \\
      &   Ours - Mahala, Gram (5/4 + 5/4)            &98.82 & 99.07  \\
      &   Ours - Mahala, Gram and Energy (5/4+5/4+1)  &98.21 & 99.05 \\
      \hline \\
CIFAR10   &   Mahala  (penultimate layer)                    &96.90 & 96.59   \\
      &   Gram 	                                      &95.35 & 87.06   \\
      &   Energy	                                      &89.09 &  77.72  \\
      &   Ours - Mahala (5/4)                          &97.63 & 97.50 \\
      &   Ours - Gram (5/4)                            &95.35 & 87.21  \\
      &   Ours - Mahala, Energy (5/4 + 1)              &97.68 & 97.43 \\
      &   Ours - Gram, Energy (5/4 + 1)                &95.94 & 91.15  \\
      &   Ours - Mahala, Gram (5/4 + 5/4)            &97.32 & 96.91 \\
      &   Ours - Mahala, Gram and Energy (5/4+5/4+1)  &97.10 & 96.98  \\
      \hline \\
\end{tabular}
\end{adjustbox}
\end{center}
\label{table:auroc_svhn}
\end{table*}

We further compare our proposed method of combining multiple scores in Algorithm \ref{main} with a baseline that combines scores naively through an averaging rule. 
This naive OOD detection test maintains thresholds $\tau_1, \ldots, \tau_K$ for the $K$ scores. Let $\gamma_i$ be the weight for the $i$-th score where
\begin{equation}
    \gamma_i = \identityf{T_i \geq \tau_i},
\end{equation}
and let $\gamma$ be defined as 
\begin{equation}
    \gamma = \frac{1}{K} \sum_{i=1}^K \gamma_i.
\end{equation}
The naive averaging OOD detection rule declares an input to be an OOD sample if $\gamma \geq \frac{1}{2}$. The thresholds $\tau_1, \ldots, \tau_K$ are set to ensure a false alarm probability of $0.1$. In Table \ref{table:svhn_and_cifar10_naive}, we present a comparison of the detection powers of the naive averaging rule and our proposed method of combining scores, where both methods use the Mahalanobis and Gram scores from all the layers, and the energy score. We observe that the naive averaging method does not perform as well as our proposed method of combining statistics, and indeed has a high variability across different OOD datasets. Thus, we see that while it is imperative to combine multiple scores for effective and robust OOD detection, combining them in an adhoc manner such as uniform averaging does not yield good results. 

In some of the prior work on OOD detection, the Area Under the Receiver Operating Characteristic  (AUROC) metric has been used to compare different tests (\cite{odin, energy, gram, mahala}). However, it is not clear that this measure is useful in such a comparison, especially when the ROC is being estimated through simulations. It is possible for a test (say, Test 1) to have a larger AUROC than another test (say, Test 2), with Test 2 having a larger detection power than Test 1 for all values of false alarm less than some threshold (equivalently, all values of TPR greater than some threshold). Nevertheless, we provide the AUROC numbers for our experimental setups below for completeness.

Table \ref{table:auroc_cifar10} contains the AUROC numbers for CIFAR10 as the in-distribution dataset, and Table \ref{table:auroc_svhn} contains the AUROC numbers for SVHN as the in-distribution dataset. We observe similar patterns in the AUROC numbers as the above observations on the detection power at a fixed false alarm probability. The Mahalanobis, Gram and Energy baselines have a high variability across different kinds of OOD samples and DNN architectures, whereas our proposed method of combining all statistics has a low variability. For instance, the Mahalanobis, Gram and Energy baselines for DenseNet architecture with CIFAR10 as the in-distribution dataset have an average variability of 10.5$\%$ in their AUROC, whereas our proposed method of combining all statistics has a variability of $0.5\%$ in the AUROC performance. Our proposed method of combining all statistics either has the best AUROC performance or within $1\%$ of the best performance in all 8 cases for CIFAR10 and SVHN as the in-distribution datasets.


\section{Conclusion}
While empirical methods for OOD detection have been studied extensively in recent literature, a formal
characterization of OOD is lacking. We proposed a characterization for the notion of OOD that includes both the input distribution and the ML model. This provided insights for the construction of effective OOD detection tests.
Our approach, inspired by \emph{multiple hypothesis testing}, 
allows us to systematically combine any number of different statistics derived from the ML model with an arbitrary dependence structure. 

Furthermore, our analysis allows us to set the test thresholds to meet given constraints on the probability of incorrectly classifying an in-distribution sample as OOD (false alarm probability). We provide strong theoretical guarantees on the probability of false alarm in OOD detection, conditioned on the dataset used for computing the conformal p-values.

In our experiments, we observe that no single score is useful for detecting different kinds of OOD instances. We demonstrated that our proposed method outperforms threshold-based tests for OOD detection proposed in prior work. Across different kinds of OOD examples, we observed that the state-of-the-art methods from prior work exhibit high variability across OOD instances and neural network architectures in their probability of detection of OOD samples. In contrast, our proposed method is robust and provides uniformly good performance (with respect to both detection power and AUROC) across different kinds of OOD samples and neural network architectures. This robustness is important, since a useful OOD detection algorithm should perform well regardless of the type of OOD instance encountered at inference time.


\vfill

\bibliographystyle{IEEEtran}
\bibliography{ref.bib}

\appendix



\subsection{Preliminaries on Multiple Testing}

Multiple hypothesis testing (a.k.a. multiple testing) refers to the inference problem testing between multiple binary hypotheses, e.g., $\mathrm{H}_{0,i}$ versus $\mathrm{H}_{1,i}$, $i=1,2, \ldots,K$. 
For {a given} multiple testing procedure, let $R$ be the number of null hypotheses rejected (i.e., number of tests declared as the alternative $H_{1,i}$), out of which $V$ is the number of true null hypotheses. Some measures of performance for multiple testing procedures are as follows: 

\begin{enumerate}
    \item Family Wise Error Rate (FWER): The probability of rejecting at least one null hypothesis when all of them are true. 
    \item False Discovery Rate (FDR): Expected ratio of number of true null hypotheses rejected ($V$) and the total number of hypotheses rejected ($R$), i.e.,
    \begin{equation}
        \mathrm{FDR} = \mathrm{E}\left[\frac{V}{R} \identityf{R > 0}\right].
    \end{equation}
    where the expectation is taken over the joint distribution of the statistics involved in the multiple testing problem.
\end{enumerate}
When all the null hypotheses are true, $V=R$ with probability 1, and:
$$
\mathrm{FDR} =  \mathrm{E}\left[\identityf{R > 0}\right] = \mathrm{P} \{R > 0\} = \text{FWER}.
$$

Various multiple testing procedures have been proposed in literature depending on the quantity of interest to be controlled. 
Widely used multiple testing procedures involve calculating the p-values for each test $i$ as ${Q}^i$, and combining these p-values to give decisions for each hypothesis. Let $\alpha > 0$. One of the earliest tests proposed to control the FWER is the Bonferroni test. In this test, each ${Q}^i$ is computed, and for each $i = 1, \ldots, K$, the corresponding hypothesis $\mathrm{H}_{0,i}$ is rejected if 
\begin{equation}
    {Q}^i \leq \frac{\alpha}{K}.
\end{equation}
This test controls the FWER at $\alpha$ for any joint distribution of the test statistics of the $K$ hypotheses. However, the power of this test has been observed to be low, and hence the test is considered to be conservative. 
The FDR measure was proposed by \cite{bh1995}, who also proposed a procedure to control the FDR. Let the p-values for each test $i$ be ${Q}^i$, and let the ordered p-values be denoted by ${Q}^{(1)}, {Q}^{(2)}, \ldots, {Q}^{(K)}$. Let 
\begin{equation}
    m = \max \left\{i : {Q}^{(i)} \leq \frac{\alpha i}{K} \right\}.
\end{equation}
The Benjamini-Hochberg (BH) procedure rejects hypotheses $\mathrm{H}_{0,1}, \ldots, \mathrm{H}_{0,m}$, and controls the FDR at level $\alpha$ when the test statistics are independent. 
\cite{benjamini2001} showed that the constants in the BH procedure can be modified to $\frac{\alpha i}{K\sum_{j=1}^K \frac{1}{j}}$ instead of $\frac{\alpha i}{K}$ to control the FDR at level $\alpha$ for arbitrarily dependent test statistics. 
Note that the Bonferroni procedure and the BH procedure can be used to test against the global null $\mathrm{H_0}$ (all $\mathrm{H_{0,i}}$ are true), where the probability of false alarm $\mathrm{P_{H_0}}( \text{reject } \mathrm{H_0})$ is equal to the FWER and FDR. Our proposed algorithm for the OOD detection problem builds on the BH procedure with the modified constants, and conformal p-values calculated using a calibration dataset $\mathcal{T}_\mathrm{cal}$, where we aim to control the conditional probability of false alarm $\mathrm{P_F}(\mathcal{T}_\mathrm{cal}) = \mathrm{P_{H_0}}( \text{reject } \mathrm{H_0} | \mathcal{T}_\mathrm{cal})$ with high probability.

\subsection{Proposed OOD Modelling}

In Section 2, we conclude that functions of the input from the learning algorithm apart from the final output are required for the OOD formulation presented above. Note that this does not violate the data-processing inequality, as the out-distribution $\mathsf{P}_{X,\hat{Y}}$ characterizes the input and the model, and these functions of the input give us additional information regarding the learning algorithm. In addition, these functions give us information to differentiate between the null and the alternate hypothesis.

\subsection{Proof of Theorem 1}

For $\ell = 1, 2, \ldots, K$, let 
\begin{equation}
    \alpha_\ell = \frac{\alpha \ell}{C(K)K},
\end{equation}
where 
\begin{equation}
    C(K) = (1 + \epsilon)\sum_{j=1}^K \frac{1}{j}
\end{equation}
and let $\alpha_0 = 0$.
As in \eqref{eq:define_P_F}, the probability of false alarm conditioned on the calibration set $\mathcal{T}_\mathrm{cal}$ is given by 
\begin{equation}
    \mathrm{P}_\mathrm{F} (\mathcal{T}_\mathrm{cal}) = \mathrm{P_{H_0}}(\text{reject } \mathrm{H}_0 | \mathcal{T}_\mathrm{cal}) = \mathrm{P_{H_0}}(m \geq 1 | \mathcal{T}_\mathrm{cal}), 
\end{equation}
where $m$ is as defined in Algorithm 1. Here $\mathrm{H}_0$ denotes the global null hypothesis, which corresponds to all the $\mathrm{H}_{0,i}$ being true. Note that $m \geq 1$ signifies that $\mathrm{H}_{0,(1)}, \ldots, \mathrm{H}_{0,(m)}$ are being rejected. Let
\[
A_\ell = \{\text{exactly $\ell$ of the $\mathrm{H}_{0,i}$'s are rejected}\}.
\]
Then, 
\begin{equation}\label{eq:P_F_expansion}
    \mathrm{P}_\mathrm{F} (\mathcal{T}_\mathrm{cal}) = \sum_{\ell=1}^K \mathrm{P_{H_0}} (A_\ell | \mathcal{T}_\mathrm{cal}).
\end{equation}
The following lemma is useful in deriving an upper bound for $\mathrm{P}_\mathrm{F}$. 
\begin{lemma} \label{lem:lem3}
For $\ell = 1, \ldots, K$, 
\begin{equation}
    \mathrm{P_{H_0}}(A_\ell) = \frac{1}{\ell} \sum_{i=1}^{K} \mathrm{P_{H_0}}(\{\hat{Q}^i \leq \alpha_\ell\} \cap A_\ell), 
\end{equation}
where $\hat{Q}^i$ is as defined in Section 3. 
\end{lemma}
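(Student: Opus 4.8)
The plan is to recognize the event $A_\ell$ as the event $\{m = \ell\}$ and then to establish a pathwise \emph{counting identity}: on $A_\ell$, exactly $\ell$ of the conformal $p$-values fall at or below the threshold $\alpha_\ell$. Given this, the lemma follows immediately from linearity of expectation. First I would note that, under the global null, the rule in Algorithm~1 rejects $\mathrm{H}_{0,(1)}, \ldots, \mathrm{H}_{0,(m)}$, so rejecting exactly $\ell$ hypotheses is equivalent to $m = \ell$; hence $A_\ell = \{m = \ell\}$.

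The core step is to show that, on $A_\ell$,
\begin{equation}
\sum_{i=1}^{K} \identityf{\hat{Q}^i \leq \alpha_\ell} = \ell.
\end{equation}
Using the ordering $\hat{Q}^{(1)} \leq \cdots \leq \hat{Q}^{(K)}$ together with $m = \max\{i : \hat{Q}^{(i)} \leq \alpha_i\}$, the event $m = \ell$ gives $\hat{Q}^{(\ell)} \leq \alpha_\ell$, so the $\ell$ smallest $p$-values are all $\leq \alpha_\ell$. For the remaining indices, $m = \ell$ forces $\hat{Q}^{(j)} > \alpha_j$ for every $j > \ell$; combined with the strict monotonicity $\alpha_{\ell+1} > \alpha_\ell$ (which holds since $\alpha_j = \alpha j/(C(K)K)$ is increasing in $j$), this yields $\hat{Q}^{(\ell+1)} > \alpha_{\ell+1} > \alpha_\ell$ and therefore $\hat{Q}^{(j)} > \alpha_\ell$ for all $j \geq \ell+1$. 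Thus precisely the $\ell$ rejected $p$-values lie at or below $\alpha_\ell$, which is the counting identity.

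Finally I would assemble the claim. Writing each probability as an expectation of indicators and factoring out $\identityf{A_\ell}$,
\begin{equation}
\sum_{i=1}^{K} \mathrm{P_{H_0}}\big(\{\hat{Q}^i \leq \alpha_\ell\} \cap A_\ell\big) = \expect{\identityf{A_\ell} \sum_{i=1}^{K} \identityf{\hat{Q}^i \leq \alpha_\ell}} = \ell \, \mathrm{P_{H_0}}(A_\ell),
\end{equation}
where the last equality substitutes the counting identity valid on $A_\ell$. Dividing through by $\ell$ gives the stated result.

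The main obstacle is the counting step, and specifically its \emph{upper} tail: ruling out that any non-rejected $p$-value accidentally falls below $\alpha_\ell$. This is exactly where the monotonicity of the thresholds $\alpha_j$ is essential, since it converts the defining property $\hat{Q}^{(j)} > \alpha_j$ (for $j > \ell$) into the uniform bound $\hat{Q}^{(j)} > \alpha_\ell$. Everything else is routine bookkeeping with linearity of expectation, and the argument is insensitive to whether one works marginally or conditionally on $\mathcal{T}_\mathrm{cal}$, since the identity holds pointwise on the event $A_\ell$.
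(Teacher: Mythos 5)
Your proof is correct and takes essentially the same approach as the paper's: both hinge on the observation that on $A_\ell$ exactly the $\ell$ rejected conformal p-values satisfy $\hat{Q}^i \leq \alpha_\ell$ while the remaining $K-\ell$ exceed $\alpha_\ell$, after which the identity is bookkeeping. The only differences are cosmetic---the paper formalizes the count by partitioning $A_\ell$ into the disjoint events $A_\ell^{\mathcal{V}}$ over rejected index sets $\mathcal{V}$ rather than invoking linearity of expectation directly, and your write-up usefully makes explicit the threshold-monotonicity step ($\hat{Q}^{(j)} > \alpha_j > \alpha_\ell$ for $j > \ell$) that the paper asserts without comment.
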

\begin{proof}

Let
\[
\mathcal{W}_\ell = \{\text{all subsets of $\{1,2,\ldots,K\}$ with $\ell$ elements}\}.
\]
Let $A_\ell^\mathcal{V}$ be the subset of $A_\ell$ where the $\ell$ null hypotheses rejected  correspond to the indices in $\mathcal{V} \in \mathcal{W}_\ell$. Then
\begin{equation}
    A_\ell = \bigcup\limits_{\mathcal{V} \in \mathcal{W}_\ell} A_\ell^{\mathcal{V}}.
\end{equation}
Note that if $\ell$ null hypotheses corresponding to the indices in $\mathcal{V} \in \mathcal{W}_\ell$ are rejected, then the conformal p-values corresponding to these $\ell$ tests are less than or equal to $\alpha_\ell$ (since the maximum among them is less than or equal to $\alpha_\ell$), and the conformal p-values corresponding to the remaining $K - \ell$ tests are greater than $\alpha_\ell$, i.e., 
\begin{equation}
    \hat{Q}^i \leq \max\limits_{j \in \mathcal{V}} \hat{Q}^j \leq  \alpha_\ell \;\;\; \text{ for } i \in \mathcal{V},
\end{equation}
and 
\begin{equation}
    \hat{Q}^i > \alpha_\ell \;\;\;\;\;\;\;\;\;\;\;\;\;\;\;\;\;\;\;\; \text{ for } i \not\in \mathcal{V}.
\end{equation}
Thus,
\begin{equation}\label{eq:prob_intersection_A_l_Q_i}
    \mathrm{P_{H_0}}(\{\hat{Q}^{i} \leq \alpha_\ell\} \cap A_\ell^\mathcal{V} ) = \begin{cases}
            \mathrm{P_{H_0}}(A_\ell^\mathcal{V}) \;\;\;\;\; \text{if } i \in \mathcal{V} \\
            0 \;\;\;\;\;\;\;\;\;\;\;\;\;\;\;\;\;\text{else}.
        \end{cases}
\end{equation}
Then, 
\begin{align}
    \sum_{i=1}^{K} \mathrm{P_{H_0}}(\{\hat{Q}^{i} \leq \alpha_\ell\} \cap A_\ell) 
    &=\sum_{i=1}^K \sum_{\mathcal{V} \in \mathcal{W}_\ell}  \mathrm{P_{H_0}}(\{\hat{Q}^{i} \leq \alpha_\ell \} \cap A_\ell^\mathcal{V}) \\
    &= \sum_{\mathcal{V} \in \mathcal{W}_\ell} \sum_{i=1}^K \mathrm{P_{H_0}}(\{\hat{Q}^i \leq \alpha_v \} \cap A_\ell^\mathcal{V}) \\
    &= \sum_{\mathcal{V} \in \mathcal{W}_\ell}\sum_{i=1}^K \identityf{i \in \mathcal{V}} \mathrm{P_{H_0}}(A_\ell^\mathcal{V}) \\
    &= \sum_{\mathcal{V} \in \mathcal{W}_\ell} \mathrm{P_{H_0}} (A_\ell^\mathcal{V}) \sum_{i=1}^K \identityf{i \in \mathcal{V}} \\
    &= \sum_{\mathcal{V} \in \mathcal{W}_\ell} \mathrm{P_{H_0}}(A_\ell^\mathcal{V})\, \ell \\
    &= \ell \, \mathrm{P_{H_0}}(A_\ell),
\end{align}
where the first equality arises from the fact that $A_\ell$ is the union of disjoint sets $ A_\ell^\mathcal{V} $ for $\mathcal{V} \in \mathcal{W}_\ell$, and the third equality follows from \eqref{eq:prob_intersection_A_l_Q_i}.
\end{proof}

Using the result from Lemma~\ref{lem:lem3} in the expression for $\mathrm{P}_\mathrm{F}(\mathcal{T}_\mathrm{cal})$ in \eqref{eq:P_F_expansion}, we obtain that
\begin{equation}\label{eq:P_F_A_l}
    \mathrm{P}_\mathrm{F}(\mathcal{T}_\mathrm{cal}) = \sum_{\ell=1}^K \sum_{i=1}^K \frac{1}{\ell} \mathrm{P_{H_0}}(\{\hat{Q}^i \leq \alpha_\ell\} \cap A_\ell | \mathcal{T}_\mathrm{cal}). 
\end{equation}
Note that by definition, $\alpha_0 < \alpha_1 < \ldots, \alpha_K$. Thus, 
\begin{equation}
    \{\hat{Q}^i \leq \alpha_\ell\} \cap A_{\ell} = \cup_{j=1}^\ell \{\hat{Q}^i \in (\alpha_{j-1}, \alpha_j]\} \cap A_{\ell},
\end{equation}
and 
\begin{align}
    \mathrm{P}_\mathrm{F}(\mathcal{T}_\mathrm{cal}) &=  \sum_{i=1}^K  \sum_{\ell=1}^K \frac{1}{\ell} \sum_{j=1}^\ell \mathrm{P_{H_0}}(\{\hat{Q}^i \in (\alpha_{j-1},\alpha_j]\} \cap A_{\ell} | \mathcal{T}_\mathrm{cal}) \\
     &=  \sum_{i=1}^K  \sum_{j=1}^K \sum_{\ell=j}^K \frac{1}{\ell}\mathrm{P_{H_0}}(\{\hat{Q}^i \in (\alpha_{j-1},\alpha_j]\} \cap A_{\ell} | \mathcal{T}_\mathrm{cal}) \\
    &\leq \sum_{i=1}^K  \sum_{j=1}^K \sum_{\ell=j}^K \frac{1}{j} \mathrm{P_{H_0}}(\{\hat{Q}^i \in (\alpha_{j-1},\alpha_j]\} \cap A_{\ell} | \mathcal{T}_\mathrm{cal}) \\
    &\leq \sum_{i=1}^K  \sum_{j=1}^K \frac{1}{j} \sum_{\ell=1}^K  \mathrm{P_{H_0}}(\{\hat{Q}^i \in (\alpha_{j-1},\alpha_j]\} \cap A_{\ell} | \mathcal{T}_\mathrm{cal}).\label{eq:P_F_final}
\end{align}
 Note that the events $A_{\ell}$ are disjoint for $\ell = 1, \ldots, K$. 
Thus, 
\begin{align}
    \sum_{\ell=1}^K \mathrm{P_{H_0}}(\{\hat{Q}^i \in (\alpha_{j-1}, \alpha_j]\} \cap A_{\ell} | \mathcal{T}_\mathrm{cal}) &= 
    \mathrm{P_{H_0}}(\{\hat{Q}^i \in (\alpha_{j-1}, \alpha_j]\} \cap (\cup_{\ell=1}^K A_{\ell}) | \mathcal{T}_\mathrm{cal}) \\
    &\leq \mathrm{P_{H_0}}(\{\hat{Q}^i \in (\alpha_{j-1}, \alpha_j]\} | \mathcal{T}_\mathrm{cal}) \\
    &= \mathrm{P_{H_0}}(\{\hat{Q}^i \leq \alpha_j\} | \mathcal{T}_\mathrm{cal}) - \mathrm{P}(\{\hat{Q}^i \leq \alpha_{j-1}\} | \mathcal{T}_\mathrm{cal})\label{eq:split_prob}.
\end{align} 
Using this in \eqref{eq:P_F_final},we get that
\begin{align}
    \mathrm{P}_\mathrm{F}(\mathcal{T}_\mathrm{cal}) 
    &\leq \sum_{i=1}^K \sum_{j=1}^K \frac{1}{j} (\mathrm{P_{H_0}}(\{\hat{Q}^i \leq \alpha_j\} | \mathcal{T}_\mathrm{cal}) - \mathrm{P_{H_0}}(\{\hat{Q}^i \leq \alpha_{j-1}\} | \mathcal{T}_\mathrm{cal})).
\end{align}

Let $r_j^i = \mathrm{P_{H_0}}(\{\hat{Q}^i \leq \alpha_j\} | \mathcal{T}_\mathrm{cal})$. Then, rearranging the terms from above, we get
\begin{equation}
    \mathrm{P}_\mathrm{F}(\mathcal{T}_\mathrm{cal})  = \sum_{i=1}^K \left[ \sum_{j=1}^{K-1} \frac{r_j^i}{j(j+1)} + \frac{r_K^i}{K} \right].
\end{equation}
Note that $\mathrm{P}_\mathrm{F}$ is a function of $\mathcal{T}_\mathrm{cal}$  only through random variables $r_j^i$. We have from \cite{conformal, bates} that~$r_j^i$ follows a Beta distribution, i.e.,~$r_j^i \sim \mathrm{Beta} (a_j, b_j)$, where 
\begin{align}
    a_j &= \left\lfloor (n_\mathrm{cal} + 1)\frac{\alpha j}{C(K) K} \right\rfloor \\
    b_j &= (n_\mathrm{cal} + 1) - a_j.
\end{align}
The mean of this distribution is $\mu_j = \frac{a_j}{a_j + b_j}$. 
Let $E$ denote the event 
\begin{equation}
\bigcap_{i=1}^K \bigcap_{j=1}^K \left\{r_j^i \leq (1 + \epsilon) \frac{\alpha j}{C(K) K} \right\}.
\end{equation}
When the condition on $n_\mathrm{cal}$ in Lemma \ref{cal_assumption} is satisfied, we have that 
\begin{equation}
    \mathrm{P}(E) \geq 1 - \delta.
\end{equation}
Under the event $E$, we have that 
\begin{align}
    \mathrm{P}_\mathrm{F}(\mathcal{T}_\mathrm{cal}) &= \sum_{i=1}^K \left[ \sum_{j=1}^{K-1} \frac{r_j^i}{j(j+1)} + \frac{r_K^i}{K} \right] \\
    &\leq \sum_{i=1}^K \left[\sum_{j=1}^{K-1} \frac{(1+\epsilon)\alpha}{(j+1)C(K)K} + \frac{(1+\epsilon)\alpha}{C(K)K} \right] \\
    &= \frac{(1+\epsilon)\alpha}{C(K)} \left[ \sum_{j=1}^{K-1} \frac{1}{j+1} + 1 \right] \\
    &= \frac{(1+\epsilon)\alpha}{C(K)} \left( \sum_{j=1}^{K} \frac{1}{j}\right) = \alpha.
\end{align}

Thus, with probability greater than $1 - \delta$, we have that 
\begin{equation}
    \mathrm{P}_\mathrm{F}(\mathcal{T}_\mathrm{cal}) \leq \alpha.
\end{equation}

\subsection{Additional Experimental Results}

All experiments presented in this paper were run on a single NVIDIA GTX-1080Ti GPU with PyTorch. 


In addition, we provide the detection probabilities for CIFAR100 as the in-distribution dataset in \ref{table:cifar100}. We consider the Mahalanobis scores and Gram scores from the individual layers for the same. Recall that the energy score is a temperature scaled log-sum-exponent of the softmax scores, i.e., $-T \log \sum_{i=1}^c e^{\sigma_i(x) /T} $ where $c$ is the number of classes, $\sigma_i$ are the softmax scores, and $T$ is the temperature parameter. We do not consider the energy score as one of the statistics for CIFAR100 as in-distribution, as we do not expect it to give a good representation of the in-distribution data. As the number of classes in CIFAR100 is quite large (100), we expect the softmax scores to not provide a reliable confidence score for distinguishing in-distribution points from OOD samples. Table \ref{table:auroc_cifar100} contains the AUROC numbers for CIFAR100 as the in-distribution dataset. 

\subsection{Comparison with Bonferroni inspired test}

It is possible to construct an OOD detection test adapted from the Bonferroni procedure similar to Algorithm \ref{main}, by replacing $m$ with:
\begin{equation}
    m =  \left| \left\{ i: \hat{Q}^i \leq \frac{\alpha}{(1 + \epsilon)K} \right\} \right|,
\end{equation}

i.e., calculating $m$ as the number of hypotheses $i$ for which the corresponding conformal p-value is smaller than the constant $\frac{\alpha}{(1 + \epsilon) K}$. A sample is declared as OOD if $m \geq 1$. This procedure is detailed in Algorithm \ref{alg:bonferroni} for completeness. 

\begin{algorithm}[tb]
   \caption{Bonferroni based OOD detection test with conformal p-values}
   \label{alg:bonferroni}
\begin{algorithmic}
   \STATE {\bfseries Inputs:} 
   \STATE New input $X_\mathrm{test}$; \\ Scores over $\mathcal{T}_{cal}$ as $\left\{ \{T^1_j = s^1(X_j) : j \in \mathcal{T}_\mathrm{cal}\}, \ldots, \{T^K_j = s^K(X_j) : j \in \mathcal{T}_\mathrm{cal}\} \right\}$; \\ ML model $f(\mathbf{W}, .)$; \\ Desired conditional probability of false alarm $\alpha \in (0,1)$.
   \STATE {\bfseries Algorithm:} 
   \STATE For $X_\mathrm{test}$, compute scores $T^i_\mathrm{test}$.
   \STATE Calculate conformal p-values as:
   \begin{equation}\label{eq:conformal_p_value}
       \hat{Q}^ i = \frac{1 + |\{j \in \mathcal{T}_\mathrm{cal}: T^i_j \geq T^i_\mathrm{test}\}|}{1 + |\mathcal{T}_\mathrm{cal}|}.
   \end{equation}
   \STATE Calculate $m = \left| \left\{ i: \hat{Q}^i \leq \frac{\alpha}{(1 + \epsilon) K} \right\} \right|$.
   \STATE {\bfseries{Output:}} 
   \STATE Declare OOD if $m \geq 1$.
\end{algorithmic}
\end{algorithm}

However, in general, the Bonferroni procedure has been observed to have a smaller detection power as compared to the BH procedure. We provide a comparison of the detection performance of using the Bonferroni inspired OOD detection test versus the BH inspired test proposed in Algorithm \ref{main} in Tables \ref{table:comparison_CIFAR10} and \ref{table:comparison_SVHN}.
We can provide guarantees on the conditional false alarm probability similar to Theorem \ref{main_theorem} for Algorithm \ref{alg:bonferroni} as well. 

\begin{theorem}
    Let $\alpha, \delta \in (0,1)$. Let $\mathcal{T}_\mathrm{cal}$ be a calibration set,
and let $n_\mathrm{cal}$  be such that for a given $\delta >0$, 
\begin{equation}\label{eq:n_cal_bonferroni}
    I_{(1+\epsilon)\mu} (a, b) \geq 1 - \frac{\delta}{K}, 
\end{equation}
where $
    a = \left\lfloor (n_\mathrm{cal} + 1)\frac{\alpha }{(1 + \epsilon)K} \right\rfloor$ , 
    $b = (n_\mathrm{cal} + 1) - a$,
 $\mu = \frac{a}{a+b}$, and $I_x(a,b)$ is the CDF of a Beta distribution with parameters $a, b$. 
 Then, for a new input $X_\mathrm{test}$ and a ML model $f(\mathbf{W}, .)$, the probability of incorrectly detecting $X_\mathrm{test}$ as OOD conditioned on $\mathcal{T}_\mathrm{cal}$ while using Algorithm \ref{alg:bonferroni} is bounded by $\alpha$, \textit{i.e.},
\begin{equation}
    \mathrm{P_{H_0}}\left({\text{declare OOD }  }  | \mathcal{T}_\mathrm{cal}\right) \leq \alpha, 
\end{equation}
with probability $1 - \delta$.
\end{theorem}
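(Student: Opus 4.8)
The plan is to follow the structure of the proof of Theorem~\ref{main_theorem}, but the analysis is substantially simpler because the Bonferroni-type rule in Algorithm~\ref{alg:bonferroni} compares every conformal p-value against a single fixed threshold rather than the ordered, index-dependent thresholds of the BH rule. First I would write the conditional false alarm probability as
\[
\mathrm{P_F}(\mathcal{T}_{\mathrm{cal}}) = \mathrm{P_{H_0}}\Big( \exists\, i : \hat{Q}^i \le \tfrac{\alpha}{(1+\epsilon)K} \,\Big|\, \mathcal{T}_{\mathrm{cal}} \Big),
\]
since declaring OOD is exactly the event $m \ge 1$. A union bound over the $K$ hypotheses immediately yields
\[
\mathrm{P_F}(\mathcal{T}_{\mathrm{cal}}) \le \sum_{i=1}^K \mathrm{P_{H_0}}\Big( \hat{Q}^i \le \tfrac{\alpha}{(1+\epsilon)K} \,\Big|\, \mathcal{T}_{\mathrm{cal}} \Big),
\]
which is where the Bonferroni factor of $K$ enters; no combinatorial decomposition of the rejection events (as in Lemma~\ref{lem:lem3}) is needed here.

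Next, I would define $r^i = \mathrm{P_{H_0}}( \hat{Q}^i \le \frac{\alpha}{(1+\epsilon)K} \mid \mathcal{T}_{\mathrm{cal}})$, so that $\mathrm{P_F}(\mathcal{T}_{\mathrm{cal}})$ depends on $\mathcal{T}_{\mathrm{cal}}$ only through the $r^i$. Exactly as in the proof of Theorem~\ref{main_theorem}, I would invoke the result of \cite{conformal, bates} that the conditional CDF of a conformal p-value evaluated at the threshold is Beta-distributed, i.e.\ $r^i \sim \mathrm{Beta}(a,b)$ with $a,b$ as in the theorem statement. The goal is to show each $r^i \le \alpha/K$ with high probability, since summing these $K$ bounds gives $\sum_i r^i \le \alpha$. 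The condition \eqref{eq:n_cal_bonferroni} guarantees $\mathrm{P}(r^i \le (1+\epsilon)\mu) = I_{(1+\epsilon)\mu}(a,b) \ge 1 - \delta/K$. Since $\mu = a/(a+b) \le \frac{\alpha}{(1+\epsilon)K}$ by the floor in the definition of $a$, we have $(1+\epsilon)\mu \le \alpha/K$, and monotonicity of the regularized incomplete beta function gives $\mathrm{P}(r^i \le \alpha/K) \ge 1 - \delta/K$.

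Finally, I would apply a union bound a second time, now over the draw of the calibration set, to control the $r^i$ jointly: letting $E = \bigcap_{i=1}^K \{ r^i \le \alpha/K\}$,
\[
\mathrm{P}(E^c) \le \sum_{i=1}^K \mathrm{P}\big( r^i > \tfrac{\alpha}{K} \big) \le K \cdot \tfrac{\delta}{K} = \delta,
\]
so $\mathrm{P}(E) \ge 1 - \delta$, and on $E$ the first union bound yields $\mathrm{P_F}(\mathcal{T}_{\mathrm{cal}}) \le \sum_i r^i \le \alpha$. The two union bounds act on disjoint sources of randomness — the first over $X_{\mathrm{test}}$ with $\mathcal{T}_{\mathrm{cal}}$ fixed, the second over $\mathcal{T}_{\mathrm{cal}}$ — and since neither step uses any independence across $i$, the result holds for arbitrarily dependent scores. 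This also explains why the budget here is $\delta/K$ rather than the $\delta/K^2$ appearing in Lemma~\ref{cal_assumption}: the BH rule must simultaneously control $K$ thresholds $\alpha_j$ for each of the $K$ p-values, whereas the Bonferroni rule uses a single threshold per p-value. I do not expect a genuine obstacle beyond bookkeeping; the only point requiring care is applying the Beta characterization of $r^i$ at the single threshold $\frac{\alpha}{(1+\epsilon)K}$ rather than at a family of thresholds, and keeping the two union bounds logically separate.
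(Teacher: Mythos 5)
Your proposal is correct and is essentially the paper's own proof: the same union bound over the $K$ events $\{\hat{Q}^i \le \frac{\alpha}{(1+\epsilon)K}\}$ conditioned on $\mathcal{T}_\mathrm{cal}$, the same Beta$(a,b)$ characterization of $r^i$ from \cite{conformal, bates}, the same monotonicity step using $(1+\epsilon)\mu \le \alpha/K$ to convert the condition \eqref{eq:n_cal_bonferroni} into $\mathrm{P}\{r^i \le \alpha/K\} \ge 1 - \delta/K$, and the same second union bound over the calibration-set randomness to get $\mathrm{P}(E) \ge 1-\delta$. Your closing remarks --- that no independence across $i$ is used and that the $\delta/K$ (versus $\delta/K^2$) budget reflects the single threshold per p-value --- are accurate and consistent with the paper's treatment.
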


\begin{proof}
    We have that 
    \begin{align}
        \mathrm{P_F}(\mathcal{T}_\mathrm{cal}) &= \mathrm{P_{H_0}}(\text{reject } \mathrm{H}_0 | \mathcal{T}_\mathrm{cal}) \\
        &= \mathrm{P_{H_0}}(m \geq 1 | \mathcal{T}_\mathrm{cal}) \\
        &= \mathrm{P_{H_0}} \left( \bigcup_{i=1}^K \left\{ \hat{Q}^i \leq \frac{\alpha}{(1 + \epsilon)K}  \right\} \bigg| \mathcal{T}_\mathrm{cal} \right) \\
        &\leq \sum_{i = 1}^K \mathrm{P_{H_0}} \left(    \left\{ \hat{Q}^i \leq \frac{\alpha}{(1 + \epsilon)K} \right\}  \bigg| \mathcal{T}_\mathrm{cal} \right)
    \end{align}

    Let $r^i = \mathrm{P_{H_0}} \left(    \left\{ \hat{Q}^i \leq \frac{\alpha}{(1 + \epsilon)K} \right\}  \bigg| \mathcal{T}_\mathrm{cal} \right)$. Thus, 
    \begin{equation}
        \mathrm{P_F}(\mathcal{T}_\mathrm{cal}) = \sum_{i=1}^K r^i.
    \end{equation}
    We have from \cite{conformal, bates} that~$r^i$ follows a Beta distribution, i.e.,~$r^i \sim \mathrm{Beta} (a, b)$, where 
\begin{align}
    a &= \left\lfloor (n_\mathrm{cal} + 1)\frac{\alpha }{(1 + \epsilon)K} \right\rfloor \\
    b &= (n_\mathrm{cal} + 1) - a.
\end{align}
The mean of this distribution is $\mu = \frac{a}{a + b}$. Let $E$ denote the event 
\begin{equation}
 E = \bigcap_{i=1}^K   \left\{r^i \leq (1 + \epsilon) \frac{\alpha}{(1+ \epsilon) K} \right\}. 
\end{equation}
When $n_\mathrm{cal}$ satisfies the condition in \eqref{eq:n_cal_bonferroni}, we have that 
\begin{align}
    1 - \mathrm{P_{H_0}}(E) &= 1 - \mathrm{P_{H_0}} \left( \bigcap_{i=1}^K   \left\{r^i \leq (1 + \epsilon) \frac{\alpha}{(1+ \epsilon) K} \right\} \right) \\
    &\leq \sum_{i=1}^K \mathrm{P_{H_0}}  \left\{r^i \geq (1 + \epsilon) \frac{\alpha}{(1+ \epsilon) K} \right\} \\
    &= \sum_{i=1}^K 1 - I_{\frac{\alpha}{K}}(a,b) \\
    &\leq \sum_{i=1}^K 1 - I_{(1 + \epsilon) \mu}(a,b) 
    \leq  \sum_{i=1}^K \frac{\delta}{K} 
    \leq \delta.
\end{align}

Thus, under event $E$, i.e., with probability greater than $1 - \delta$, we have that
\begin{equation}
\begin{split}
    \mathrm{P_F}(\mathcal{T}_\mathrm{cal}) &= \sum_{i=1}^K r^i \\
     &\leq \sum_{i=1}^K  (1 + \epsilon) \frac{\alpha}{(1+ \epsilon) K} \\
     &= \alpha.
\end{split}
\end{equation}

\end{proof}

\begin{table*}
\caption {{\small Comparison with  SOTA OOD detection techniques for CIFAR100 as in-distribution dataset. Each entry is {\bf $\mathrm{P_D} (\%)$ at $\mathrm{P_F} = 10\%$} for the corresponding detection method, OOD dataset and DNN architecture.}}
\begin{center}
\begin{adjustbox}{width=0.6\columnwidth}
\begin{tabular}{clcccc}
\\ \hline \\
\multirow{2}{*}{}  {\bf OOD Dataset} & {\bf Method} & {\bf ResNet34} & {\bf DenseNet}  \\
\\ \hline \\
SVHN   &   Mahala (penultimate layer)                               &61.75  &62.21    \\
      &   Gram 	                                      &71.60  &77.87    \\
      &   Ours - Mahala (5/4)                          &64.55  & 62.81   \\
      &   Ours - Gram (5/4)                            &58.54  & 78.15    \\
      &   Ours - Mahala, Gram (all) (5/4 + 1)         &72.81  & 70.80  \\
      \hline \\
ImageNet   &   Mahala (penultimate layer)                 &35.03  &89.05   \\
      &   Gram 	                                      &82.42  &86.42   \\
      &   Ours - Mahala (5/4)                          &86.04  &90.72    \\
      &   Ours - Gram (5/4)                            &74.43  &86.85    \\
      &   Ours - Mahala, Gram (all) (5/4 + 1)                &85.64  & 90.15 \\
      \hline \\
LSUN   &   Mahala  (penultimate layer)                    &34.00  &92.17    \\
      &   Gram 	                                      &78.36  &88.93    \\
      &   Ours - Mahala (5/4)                          &86.19  &92.86  \\
      &   Ours - Gram (5/4)                            &66.62  & 89.20   \\
      &   Ours - Mahala, Gram (all) (5/4 + 1)            &84.81  & 92.66   \\
      \hline \\
iSUN   &   Mahala  (penultimate layer)                    &36.01  &88.89    \\
      &   Gram 	                                      &83.15  &84.82    \\
      &   Ours - Mahala (5/4)                          &99.35  &99.82  \\
      &   Ours - Gram (5/4)                            &53.71  & 83.01   \\
      &   Ours - Mahala, Gram (all) (5/4 + 1)            &99.42  & 99.85 \\
      \hline 
\end{tabular}
\end{adjustbox}
\end{center}
\label{table:cifar100}
\end{table*}

\begin{table*}
\caption {{\small Comparison with  SOTA OOD detection techniques for CIFAR100 as in-distribution dataset. Each entry is {AUROC} for the corresponding detection method, OOD dataset and DNN architecture.}}
\begin{center}
\begin{adjustbox}{width=0.6\columnwidth}
\begin{tabular}{clcccc}
\\ \hline \\
\multirow{2}{*}{}  {\bf OOD Dataset} & {\bf Method} & {\bf ResNet34} & {\bf DenseNet}  \\
\\ \hline \\
SVHN   &   Mahala (penultimate layer)                    &89.35 & 85.81    \\
      &   Gram 	                                      &91.85  & 91.33   \\
      &   Ours - Mahala (5/4)                          &89.42  & 86.59   \\
      &   Ours - Gram (5/4)                            &88.86  & 91.23    \\
      &   Ours - Mahala, Gram (all) (5/4 + 1)         &91.53  & 89.98 \\
      \hline \\
ImageNet   &   Mahala (penultimate layer)                 &78.81  & 95.38   \\
      &   Gram 	                                      &94.10  & 94.13  \\
      &   Ours - Mahala (5/4)                          &94.96  & 95.65    \\
      &   Ours - Gram (5/4)                            &92.00  & 94.04   \\
      &   Ours - Mahala, Gram (all) (5/4 + 1)                &94.96  & 95.66 \\
      \hline \\
LSUN   &   Mahala  (penultimate layer)                    &78.90  & 96.39    \\
      &   Gram 	                                      &93.06  & 95.33    \\
      &   Ours - Mahala (5/4)                          &94.91  & 96.13 \\
      &   Ours - Gram (5/4)                            &90.00  & 95.19   \\
      &   Ours - Mahala, Gram (all) (5/4 + 1)            &94.73  & 96.18   \\
      \hline \\
iSUN   &   Mahala  (penultimate layer)                    &81.38  & 95.43    \\
      &   Gram 	                                      &94.71  & 94.36    \\
      &   Ours - Mahala (5/4)                          &98.12  & 98.04 \\
      &   Ours - Gram (5/4)                            &89.77  & 93.85   \\
      &   Ours - Mahala, Gram (all) (5/4 + 1)            &98.04  & 97.90 \\
      \hline \\
\end{tabular}
\end{adjustbox}
\end{center}
\label{table:auroc_cifar100}
\end{table*}

\begin{table*}
\caption {{\small Comparison with  Bonferroni inspired OOD test for CIFAR10 as in-distribution dataset. Each entry is {$\mathrm{P_D} (\%)$} at $\mathrm{P_F} = 10\%$.}}
\begin{center}
\begin{adjustbox}{width=0.6\columnwidth}
\begin{tabular}{clcccc}
\\ \hline \\
\multirow{2}{*}{}  {\bf OOD Dataset} & {\bf Method} & {\bf ResNet34} & {\bf DenseNet}  \\
\\ \hline \\
 SVHN & Mahala, Gram and Energy (BH) &97.13 &94.57 \\
      & Mahala, Gram and Energy (Bonferroni) &96.41 &91.13 \\
    \hline \\
 ImageNet & Mahala, Gram and Energy (BH) &97.03 &97.20 \\
      & Mahala, Gram and Energy (Bonferroni) &95.92 &95.89 \\
    \hline \\
 LSUN & Mahala, Gram and Energy (BH) &98.00 &97.78 \\
      & Mahala, Gram and Energy (Bonferroni) &96.99 &96.53 \\
    \hline \\
 iSUN & Mahala, Gram and Energy (BH) &97.67 &96.34 \\
      & Mahala, Gram and Energy (Bonferroni) &96.76 &94.79 \\
      \hline \\
\end{tabular}
\end{adjustbox}
\end{center}
\label{table:comparison_CIFAR10}
\end{table*}

\begin{table*}
\caption {Comparison with  Bonferroni inspired OOD test for SVHN as in-distribution dataset. Each entry is {$\mathrm{P_D} (\%)$} at $\mathrm{P_F} = 10\%$.}
\begin{center}
\begin{adjustbox}{width=0.6\columnwidth}
\begin{tabular}{clcccc}
\\ \hline \\
\multirow{2}{*}{}  {\bf OOD Dataset} & {\bf Method} & {\bf ResNet34} & {\bf DenseNet}  \\
\\ \hline \\
 CIFAR10 & Mahala, Gram and Energy (BH) &97.35 &95.23 \\
      & Mahala, Gram and Energy (Bonferroni) &95.84 &91.77 \\
    \hline \\
 ImageNet & Mahala, Gram and Energy (BH) &99.84 &99.89 \\
      & Mahala, Gram and Energy (Bonferroni) &99.72 &99.79 \\
    \hline \\
 LSUN & Mahala, Gram and Energy (BH) &99.95 &100.00 \\
      & Mahala, Gram and Energy (Bonferroni) &99.89 &99.97 \\
    \hline \\
 iSUN & Mahala, Gram and Energy (BH) &99.88 &99.98 \\
      & Mahala, Gram and Energy (Bonferroni) &99.88 &99.98 \\
      \hline \\
\end{tabular}
\end{adjustbox}
\end{center}
\label{table:comparison_SVHN}
\end{table*}

\end{document}